\documentclass[]{article}

\usepackage[utf8]{inputenc} 
\usepackage[T1]{fontenc}   
\usepackage[margin=1in]{geometry}
\usepackage{hyperref}       
\hypersetup{
    colorlinks = true,
    citecolor = blue,
    linkcolor = red
}

\usepackage{url}            
\usepackage{booktabs}       
\usepackage{amsfonts}     
\usepackage{nicefrac}       
\usepackage{microtype}      
\usepackage{xcolor}         
\usepackage{color}
\usepackage{float}
\usepackage[style=base]{caption}
\usepackage{subcaption}
\usepackage{tikz}
\usepackage{tikz-cd}
\usepackage{pgfplots}
\usepackage{abstract}
\usepgfplotslibrary{groupplots}
\pgfplotsset{compat=1.18}

\usepackage{amsmath,amssymb,amsthm,mathtools,bbm}
\usepackage{physics}
\usepackage{bbold}
\usepackage{dsfont}
\usepackage{mathrsfs}
\usepackage[capitalize]{cleveref}
\usepackage{xfrac}
\usepackage{graphicx}
\usepackage{authblk}
\usepackage[authoryear,round]{natbib}

\usepackage{times}
\usepackage{comment}

\usepackage{enumitem}
\setlist[enumerate]{leftmargin=1.5em}
\setlist[itemize]{leftmargin=1.5em}
\usepackage{autonum}

\usepackage{stmaryrd}       
\usepackage[normalem]{ulem}  
\usepackage{changepage}    
\usepackage{bm}  

\numberwithin{equation}{section}
\usepackage{thmtools}
\usepackage{mdframed}

\usepackage{xfrac}
\usepackage{algorithm}
\usepackage{algorithmic}
\usepackage{adjustbox}
\usepackage{caption}
\usepackage{float}

\newcommand{\INPUT}{\REQUIRE}

\declaretheorem[thmbox=M,name=Theorem,numberwithin=section]{theorem}

\declaretheorem[thmbox=S,name=Lemma,numberwithin=section]{lemma}

\theoremstyle{definition}

\newtheorem{remark}{Remark}[section]
\newtheorem{assumption}{Assumption}[section]

\makeatletter
\renewenvironment{proof}[1][\proofname]{\par
  \pushQED{\qed}%
  \normalfont \topsep6\p@\@plus6\p@\relax
  \trivlist
  \item[\hskip\labelsep\textbf{#1.}]\ignorespaces
}{%
  \popQED\endtrivlist\@endpefalse
}
\makeatother

\newcommand{\R}{\mathbb{R}}
\newcommand{\bx}{{\bf{x}}}
\newcommand{\bh}{{\bf{h}}}

\providecommand{\abs}[1]{\left\lvert#1\right\rvert}
\providecommand{\norm}[1]{\left\lVert#1\right\rVert}

  \providecommand{\R}{\mathbb{R}} %

\begin{document}

\title{Deep Learning of Compositional Targets\\
with Hierarchical Spectral Methods}

\author[1]{Hugo Tabanelli}
\author[1,2]{Yatin Dandi}
\author[1]{Luca Pesce}
\author[1]{Florent Krzakala}

\affil[1]{Information Learning and Physics Laboratory, \'Ecole Polytechnique F\'ed\'erale de Lausanne (EPFL)}
\affil[2]{Statistical Physics of Computation Laboratory, \'Ecole Polytechnique F\'ed\'erale de Lausanne (EPFL)}

\maketitle

\begin{abstract}
Why depth yields a genuine computational advantage over shallow methods remains a central open question in learning theory. We study this question in a controlled high-dimensional Gaussian setting, focusing on compositional target functions. 
We analyze their learnability using an explicit three-layer fitting model trained via layer-wise spectral estimators. Although the target is globally a high-degree polynomial, its compositional structure allows learning to proceed in stages: an intermediate representation reveals structure that is inaccessible at the input level. This reduces learning to simpler spectral estimation problems, well studied in the context of multi-index models, whereas any shallow estimator must resolve all components simultaneously. Our analysis relies on Gaussian universality, leading to sharp separations in sample complexity between two and three-layer learning strategies.
\end{abstract}

\section{Introduction}
\label{sec:intro}

Deep neural networks consistently outperform shallow models across a wide range of learning tasks, yet providing a rigorous and quantitative explanation for the computational advantages conferred by depth remains a central open problem in machine learning theory \citep{sejnowski2020unreasonable,zhang2021understanding}.  
While classical approximation theory establishes that deep architectures can represent certain functions more efficiently than shallow ones \citep{pmlr-v49-telgarsky16,mhaskar2017and,poggio2017and}, approximation alone does not explain why such functions should be \emph{learnable} from data using feasible sample sizes. A fundamental theoretical challenge can thus be phrased as:
\begin{center}
\textit{
\textbf{Q1:} Can one  quantify the computational advantages of depth for learning structured targets in high dimensions?
}
\end{center}

Several lines of work have provided partial answers in controlled settings.  
Deep \emph{linear} networks admit an exact characterization of training dynamics and implicit bias \citep{saxe2014exact,ji2019gradient,arora2019convergence,lee2019wide,ghorbani2021linearized}, but their limited expressivity prevents the emergence of genuinely hierarchical features.  
Another successful direction studies the learning of \emph{multi-index models}, where the target depends on the input only through a fixed low-dimensional projection \citep{BenArous2021,ba2020generalization,ghorbani2020neural,bietti2022learning,abbe2023sgd,troiani2024fundamental}.  
While mathematically rich, such targets are often efficiently learnable by shallow architectures \citep{arnaboldi2024repetita,lee2024neural}, limiting their usefulness for isolating a fundamental advantage of depth. This motivates a more structural question:
\begin{center}
\textit{
\textbf{Q2:} What class of target functions can reveal a provable separation between shallow and deep learning strategies?
}
\end{center}

There have been some recent progress in this direction.
For instance, the study of \emph{Random Hierarchy Models} (RHMs), introduced by \citet{cagnetta2024towards} as a generative model for compositional data, provides a clean benchmark where empirical evidence suggests a dramatic separation between shallow and deep architectures in terms of sample complexity. An orthogonal  line of work concerns the \emph{computational advantage of depth through training dynamics} for structured, hierarchical target functions in high-dimensional Gaussian settings: \citet{dandicomputational} introduced a class of high-dimensional hierarchical target functions and showed that deep networks trained by gradient descent can learn them with drastically fewer samples than any shallow model.   Closer to us, and a major inspiration to the present work, is the series of papers \citep{wang2023learning,nichani2024provable,fu2025learning} who discussed compositional functions and demonstrated the advantage of three-layer nets over two layers and kernel methods. 

In all these approaches, the key insight is that gradient-based training does not learn all features at once, but instead {\it progressively reveals structure across layers}, effectively decomposing a {\it complex learning task into a sequence of simpler ones}. This progressive evolution of features across layers is precisely what enables deeper architectures to tackle functions that remain inaccessible to shallow models.

In this work, we follow this perspective in the simplest tractable setting.  
We focus on a class of high-dimensional target functions that are globally complex, being high-degree polynomials of the input, but admit a \emph{compositional structure} as a sequence of nonlinear polynomial transformations.  
While such targets can exhibit arbitrarily rich behavior, their compositional form allows them to be decomposed into a hierarchy of simpler intermediate features: early layers capture coarse structure, while later layers progressively refine and assemble higher-level representations.

Rather than analyzing gradient descent directly, which leads to technically delicate dynamics even in simplified models, we adopt a more transparent approach.  
We replace gradient-based training by an explicit sequence of simple, layer-wise \emph{spectral estimators} that implement the same progressive decomposition extending prior works \cite{nichani2024provable,fu2025learning,dandicomputational}.  
This hierarchical spectral training framework isolates the computational role of depth independently of optimization dynamics, and allows us to study genuine multi-layer feature learning, with multiple successive steps of feature recovery.

Within this framework, we analyze how multi-layer hierarchical strategies confer a computational advantage by enabling a staged spectral \emph{disentanglement} of compositional targets.  
Rather than learning the full mapping $f^\star = g^\star \circ h^{\star}$ in a single step, learning proceeds sequentially by identifying intermediate structure and reusing it across layers, providing a clean explanation for the advantage of depth in learning compositional structure \cite{sejnowski2020unreasonable}.

\section{Settings, Main Results, and Related Work}
\label{sec:model}
\begin{figure}[t]
\begin{center}
\includegraphics[width=0.55\linewidth]{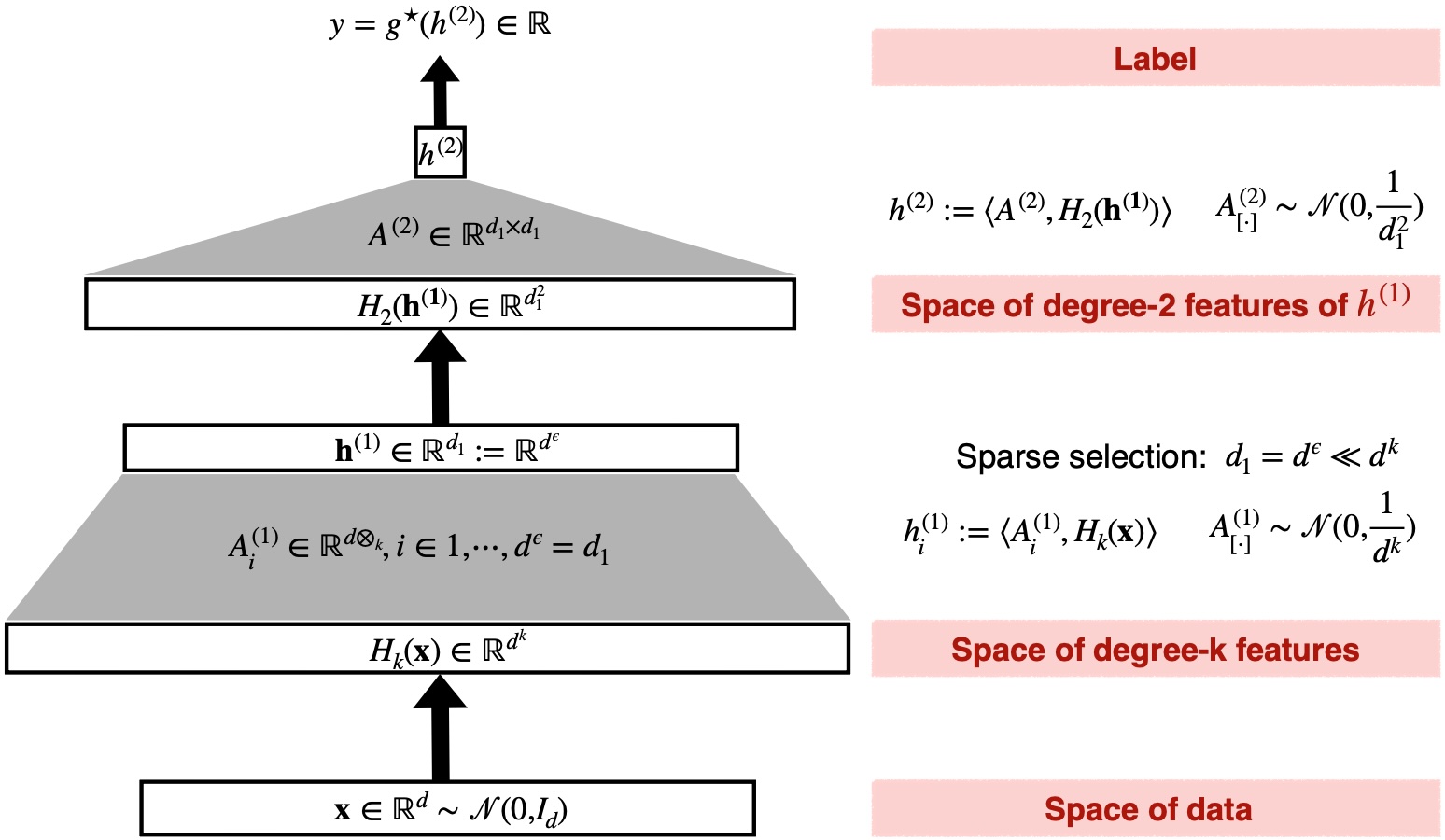}
\caption{An illustration of the compositional target functions as defined in section  \ref{sec:model}}
\end{center}
\end{figure}

\paragraph{Motivation}: To motivate our class of targets, consider the following observation: For prediction tasks on real images or natural language the initial layers of trained networks extract \emph{low-degree} features such as edge detectors for images or $n$-gram clusters for sentences. This can be interpreted as the selection of a \emph{sparse subspace} of low-degree functions of inputs. The subsequent layers again progressively perform such a filtering of non-linear features to eventually reach a label. Thus, one can model hierarchical data as iterative composition of blocks of the following kind:

\begin{align}
\text{High-dimensional inputs}\;\longrightarrow\; \text{space of low-degree features} \;\longrightarrow\; 
\text{sparse selection}, 
\end{align}

\paragraph{Data and Target Functions ---} 

We now introduce the simplest class of targets reflecting the above structure, allowing us to isolate the advantage of depth for leaning compositional functions. We shall assume that we are given as i.i.d.\ dataset $\{(\bx_\mu,y_\mu)\}_{\mu=1}^n$ with Gaussian inputs $\bx_\mu \sim \mathcal{N}(0,I_d)$ and labels $y_\mu$ given by the following {\it compositional target} functions $y_{\mu} = f ^\star(\bx_{\mu})$ 
as a general non-linearity on top of latent hierarchical polynomial features:
\begin{align}
\bx_\mu \in \mathbb{R}^d&\;\longrightarrow\; \bh^{(1)}_\mu \in \mathbb{R}^{d^\epsilon}  \;\longrightarrow\; 
h^{(2)}_\mu \in \mathbb{R}, 
\end{align}
defined as
\begin{align}
\label{eq:feature}
h^{(1)}_{i} &:= \langle A^{(1)}_i,\, H_{k}(\bx)\rangle\,, \quad i\!=\! 1,\dots,d^{\epsilon} = d_1\,, \\
h^{(2)} &:= \langle A^{(2)}, H_2(\bh^{(1)})\rangle , \quad d_2 =1 
\end{align}
and finally the label function is found by taking a general non-linearity (which we can assume to be a polynomial of degree $p$) $g^\star(\cdot)$ on top of the latest layer non-linear features:
\begin{align}
    y_\mu &= f^\star(\bx_\mu) = g^
\star(h^{(2)}_\mu)
\end{align}
The tensors $\{A^{(1)}_i\in\mathbb{R}^{d \times \cdots \times d}\}_{i=1}^{d^{\epsilon}}$ are symmetric with independent Gaussian entries of variance $\Theta(\frac{1}{d^{k/2}})$ and have an effective number of parameters $D_1 = \mathcal{B}(d,k) = \binom{d+k-1}{k}$.  The mapping $\bx \rightarrow h^{(1)}_{i} := \langle A^{(1)}_i,\, H_{k}(\bx)\rangle$ can be interpreted as a sparse selection of features in the space of degree-$k$ polynomials defiend in $\bx$.

We have used the order-$k$ Hermite tensors
\footnote{Formally defined by the tensorial Rodrigues formula
$\sqrt{k!} H_k(\bx) = (-1)^k\, e^{\|\bx\|^2/2}\, \nabla^{\otimes k} 
\!\left( e^{-\|\bx\|^2/2} \right)
$
where $\nabla^{\otimes k}$ denotes the $k$-fold symmetric tensor of derivatives. 
}. The bracket $\langle \cdot,\cdot\rangle$ denotes the Frobenius inner product $\langle U,V\rangle := \mathrm{Tr}(U^\top V)$. This reads for order-$2$ Hermite tensors
 \begin{align}
 \langle A,\, H_2(\bh)\rangle \;=\; \frac 1{\sqrt{2}} \left (\bh^\top A \bh - \mathrm{Tr}(A)\right)
 \end{align}

With this definition, $f^\star$ becomes a high-degree polynomial in ${\bx}$. We shall consider the problem in high-dimension, with 
\begin{align}
d\to\infty,
\qquad
d_{1} \propto d^{\varepsilon},
\qquad
n \propto \gamma d^{\alpha},
\end{align}
with exponents $\epsilon,\alpha\ge 0$ left free. 

The target functions introduced in \cite{wang2023learning,nichani2024provable} can be seen as particular examples of this class of function. For instance \cite{wang2023learning} used $f^\star = g^\star(\bx^\top A \bx)=g^\star(\langle A, H_2(\bx)\rangle)$ which corresponds to the case where $k=2$, $d_1=1$ ($\epsilon=0$) and $A^{(2)}=1$, while \cite{nichani2024provable} considered instead $k\ge 2$. We go beyond these  by considering $ \epsilon > 0$. Perhaps the simplest example is given by the following ``three-layer" target where we only consider  2-order Hermite:
\begin{align}
    h^{(1)}_{\mu,i} &\;:=\; \langle A^{(1)}_i,\, H_2(\bx_\mu)\rangle,\qquad i=1,\dots,d_1,\\
h^{(2)}_{\mu} &\;:=\; \langle A^{(2)},\, H_2(\bh^{(1)}_\mu)\rangle \qquad d_2 = 1  \\
y_\mu &= f^\star(\bx_{\mu}) = g^\star(h^{(2)}_{\mu})\,.
\label{eq:main:simple_target}
\end{align}

In the simplest case, when $g^\star(.)$ is just the identity, $f^\star$ is a quartic function of $\bx_\mu$ but has a ``decomposable structure" $\bx_\mu \;\rightarrow\; \bh^{(1)}_\mu \;\rightarrow\; g^\star(h^{(2)}_{\mu})$ as composition of two squares.  

\paragraph{Expected performance: informal discussion ---}
Our goal is to efficiently learn the target function $f^\star(\bx)$ from the dataset $\mathcal{D} = \{(\bx_\mu,y_\mu)\}_{\mu=1}^n$, and to understand how depth can improve sample complexity.

We begin by recalling a fundamental limitation of \textbf{kernel methods and random feature models} \citep{rahimi2007random}.  
Such methods can only learn a polynomial approximation of degree $\kappa$ in the Hermite expansion of $f^\star$ when the number of samples scales as $n = \mathcal{O}(d^\kappa)$ \citep{mei2022generalization}.  
As a consequence, these approaches are insensitive to the compositional structure of the target and depend only on its overall polynomial degree.  
For instance, in the simple example~\eqref{eq:main:simple_target}, if we use $g^\star(x)=x$, then $f^\star$ is quartic in $\bx$, and kernel methods require $n = \mathcal{O}(d^4)$ samples.  
More generally, if the outer nonlinearity $g^\star$ is a polynomial of degree $p$, the required sample size scales as $\mathcal{O}(d^{4p})$.

In contrast, multi-layer strategies exploit the compositional structure of the target.  
For the same example, \cite{fu2025learning} shows that a three-layer network can learn $f^\star$ with
\[
n = \mathcal{O} (d^4 + d^{\varepsilon p}),
\]
which already constitutes a significant improvement when $g^\star$ has low degree.

However, this scaling is still not optimal.  
A simple counting argument suggests that learning the feature map from $\bx$ to $\bh^{(1)}$ requires on the order of $d^{k+\varepsilon}$ samples, corresponding to the number of parameters in the first feature layer.  
Learning the subsequent mapping from $\bh^{(1)}$ to $h^{(2)}$ then requires only $\mathcal{O}(d^{2\varepsilon})$ samples, after which fitting the final one-dimensional nonlinearity is trivial as the effective dimensionality has been reduced to one.  
When $k \ge 2$, this suggests that the overall sample complexity  scale as
\[
n = \mathcal{O} (d^{k+\varepsilon}),
\]
which, for example, becomes $n = \mathcal{O}(d^{2+\varepsilon})$ when $k=2$ independently on $p$.

This is precisely the scaling achieved by our hierarchical approach: We construct a multi-layer learning procedure that first recovers the feature map from $\bx$ to $\bh^{(1)}$ using a spectral method, akin to Principal Component Analysis (PCA), and then recursively applies the same idea to the mapping from $\bh^{(1)}$ to $h^{(2)}$, each time with the appropriate sample complexity. From this perspective, depth enables a progressive reparameterization of the data: each layer reduces the learning problem to a low-order spectral estimation task.  
This staged disentanglement of compositional structure is what allows deep architectures to succeed in regimes where shallow estimator confront the full high-order complexity of the target at once. While our approach is not directly a gradient descent one, it is possible to establish mappings between gradient descent and the staged spectral methods (see App.~\ref{sec:gd-spectral}).

\subsection*{Summary of Main Results}
We summarize our main results below:
\begin{itemize}[noitemsep,leftmargin=1em,wide=0pt]
\item {\bf Hierarchical spectral recovery:} We introduce an explicit hierarchical spectral estimator that reconstructs the latent representations $\bh^{(1)}$ and $h^{(2)}$ using Hermite moment matrices and PCA-type spectral methods, relying on classical eigenvalue separation phenomena (e.g.\ BBP transitions \cite{baik2005phase}).  We characterize the regimes of dimension $(d,d^\varepsilon)$ and sample size $n$ under which each stage of the hierarchical procedure succeeds. The latent features $\bh^{(1)} \in \mathbb{R}^{d^\varepsilon}$ can be consistently recovered provided that the number of samples satisfies
\begin{equation}
n \gg d^{k+\varepsilon},
\end{equation}
and that the signal subspace remains sufficiently sparse in the ambient degree-$k$ Hermite feature space, namely
\begin{equation}
d^\varepsilon \ll d^{k}.
\end{equation}

Conditioned on the recovery of $\bh^{(1)}$, the scalar second-layer feature $h^{(2)}$ can be estimated as soon as
\begin{equation}
n \gg d^{2\varepsilon}.
\end{equation}

Fitting the final one-dimensional nonlinearity $g^\star$ then requires no additional computational complexity.  For $k>1$, the recovery of $\bh^{(1)}$ dominates the overall complexity, yielding a total sample complexity
\begin{align}
n_{\mathrm{tot}} = \mathcal{O}(d^{k+\varepsilon})
\end{align}
This leads to sharp separations between shallow and multi-layer learning strategies.

\item \textbf{Numerical validation:} 
While our rigorous proof is restricted to $\varepsilon<1/2$ and relies on the bounded operator norm of the estimator of the parameters, we provide extensive numerical experiments illustrating the multi-layer learning procedure and confirming the predicted sample-complexity scaling beyond these regimes.  
Moreover, despite the asymptotic nature of the theory ($d \to \infty$), the observed transitions occur already at moderate dimensions.

\item{\bf  Gaussian Equivalence:}
Our analysis relies on a principle of independent interest, that extends the Gaussian Equivalence Principle previously established for shallow models to hierarchical settings, asserting that at each layer, suitably normalized representations behave asymptotically as Gaussian vectors with explicitly computable covariances. 

\end{itemize}

\subsection{Further Related Work}\label{sec:rel_work}
\vspace{-0.1cm}

\paragraph{Random feature and kernel methods.}
A large part of the theory of neural networks comes from regimes where features are effectively fixed during training, most notably kernel methods and random feature (RF) models \citep{rahimi2007random}.  
These approaches admit sharp generalization guarantees in high dimensions \citep{gerace2020generalisation,goldt_gaussian_2021,mei2022generalization,xiao2022precise,defilippis2024dimension}, but their expressive power is fundamentally limited.  
In particular, kernel and RF methods can only recover low-degree polynomial approximations of the target, with an effective degree bounded by the number of samples and features \citep{mei2022generalization}.  

\paragraph{Multi-index and Spectral methods}
Despite substantial progress in understanding fixed-feature methods, a central challenge in learning theory remains a principled description of how learning algorithms adapt to low-dimensional structure.  
A canonical framework to study this phenomenon is provided by \emph{multi-index models}, where the target depends on the input only through a small number of linear projections followed by a nonlinear map.  
The information-theoretic limits of these models are well understood \citep{barbier2019optimal,aubin2018committee}, and a large body of work has characterized their algorithmic learnability, highlighting intrinsic limitations of kernel methods \citep{mei2022generalization} and the role of algorithmic thresholds and hardness exponents for neural networks \citep{BenArous2021,abbe2022merged,dandi2024twolayer,damian2024computational,arnaboldi2024repetita,lee2024neural}.  
Recent results show that suitable algorithmic variants of SGD can achieve near-optimal sample complexity for this class \citep{arnaboldi2024repetita,lee2024neural,troiani2024fundamental}.

The spectral estimators we employ build directly on this line of work, in particular in \cite{damian2024computational}. A variety of spectral methods for estimating low-dimensional structure in high-dimensional models have been developed and analyzed in related contexts \citep{lu2020phase,mondelli2018fundamental,maillard2022construction}.  Spectral approaches tailored to multi-index models have been proposed and shown to achieve optimal or near-optimal performance \citep{kovavcevic2025spectral,defilippis2025optimal}.  
Our extends these to a hierarchical setting, where spectral estimation is combined with rank selection and cleaning procedures.

\paragraph{Hierarchical and compositional models.}
Depth is often argued to be effective because it allows networks to exploit hierarchical or compositional structure in the data.  In addition to the works discussed in the introduction, this intuition has motivated a variety of hierarchical target and data models, including te.g. ree-structured functions and random hierarchy models \citep{mossel2016deep,daniely2017depth,poggio2017and,allen2019can,abbe2022merged,cagnetta2024towards,cagnettarandomhierarchy}.  The idea that learning proceeds by extracting structure across successive scales is closely related to coarse-graining and renormalization concepts from statistical physics \citep{wilson1971renormalization}.  
Such connections have inspired several theoretical studies of deep learning \citep{mehta2014exact,li2018neural,marchand2023multiscale,dandicomputational}.

\paragraph{Gaussian Equivalence Principles.}
A central theme underlying much recent progress—and a key technical enabler of the present work—is the emergence of \emph{Gaussian equivalence} or \emph{universality} principles, whereby the behavior of learning algorithms on non-Gaussian data can be characterized through an equivalent Gaussian model.  
Beginning with the seminal work of \citet{el2008spectrum} on the spectrum of sample covariance matrices, such Gaussian universality properties have since been established in a wide range of learning settings.  
These include generalized linear estimation and random feature models \citep{gerace2020generalisation,goldt_gaussian_2021,mei_generalization_2022,dhifallah2020precise}, empirical risk minimization and neural tangent kernel regimes \citep{montanari2022universality}, as well as mixtures of Gaussians \citep{dandi2023universality}. Of particular relevance to our analysis are recent extensions of Gaussian equivalence to \emph{quadratic} and higher-order polynomial feature models \citep{bandeira2025exact,COLTXU2025,wen2025does,xiao2022precise,hu2024asymptotics}.  
Understanding the spectral behavior of kernel and moment matrices in these polynomial regimes has become a central topic in random matrix theory \citep{lu2025equivalence}.

\section{Hierarchical spectral methods}\label{sec:method}
We now describe an explicit hierarchical spectral procedure for learning the compositional targets $f^\star(\bx)$.  
The algorithm recovers nonlinear features layer by layer through low-order moment matrices and spectral thresholding.  
At each stage, it extracts the latent representations $\bh^{(1)} \in \mathbb{R}^{d^\varepsilon}$ and $h^{(2)} \in \mathbb{R}$ defined in \eqref{eq:feature}, using only second-order spectral information.

\begin{figure}[t]
\centering
\begin{minipage}{0.7\linewidth}

\hrule
\captionof{algorithm}{Hierarchical spectral learning}

\label{alg:agnostic-recovery}
\vspace{-1em}
\hrule

\begin{algorithmic}[1]

\INPUT Data $\{(\bx_\mu,y_\mu)\}_{\mu=1}^n$, max degree $K_{\max}$

\STATE \textbf{First layer recovery:}
\FOR{$k=1$ to $K_{\max}$}
    \STATE Compute flattened degree$-k$ features and moment matrix 
    \[
    \begin{aligned}
        \phi^{(1,k)}_\mu &= F[H_k(\bx_\mu)] \\
         \widehat C^{(1)}_k &=
      \frac{1}{n}\sum_{\mu=1}^n
      y_\mu \, H_2( \phi^{(1,k)
      }_\mu)
    \end{aligned}
    \]
\ENDFOR

\STATE Select degree $\widehat k_1$ as the smallest $k' \le K_{\max}$ for which $\widehat C_{k'}$ exhibits a low-rank structure. Let the rank be $\hat d_1$.

\STATE Compute top eigenvectors $\widehat A^{(1)}$
\FOR{$\mu=1$ to $n$}
    \STATE $\widehat\bh^{(1)}_\mu \leftarrow
    \langle \widehat A^{(1)}, H_{\widehat k_1}(\bx_\mu)\rangle$
\ENDFOR

\STATE \textbf{Second layer recovery (same procedure):}
\FOR{$k=1$ to $K_{\max}$}
    \STATE Compute moment matrix
    \[
    \begin{aligned}
      \widehat A^{(2)} \leftarrow \widehat C^{(2)}_k &= \frac{1}{n}\sum_{\mu=1}^n
      y_\mu \, H_2(\widehat \bh^{(1)})
    \end{aligned}
    \]
\ENDFOR

\FOR{$\mu=1$ to $n$}
    \STATE 
    $\widehat h^{(2)}_\mu \leftarrow
    \langle \widehat A^{(2)}, H_{\widehat k_2}(\widehat\bh^{(1)}_\mu)\rangle$
\ENDFOR

\STATE Perform kernel regression on $\{(\widehat h^{(2)}_\mu,y_\mu)\}_{\mu=1}^n$
\RETURN $\{\widehat\bh^{(1)}_\mu,\widehat{h}^{(2)}_\mu,\widehat y_\mu\}_{\mu=1}^n$

\end{algorithmic}
\hrule

\end{minipage}
\end{figure}

\begin{figure*}[t]
   \centering
\includegraphics[width=0.9\linewidth]{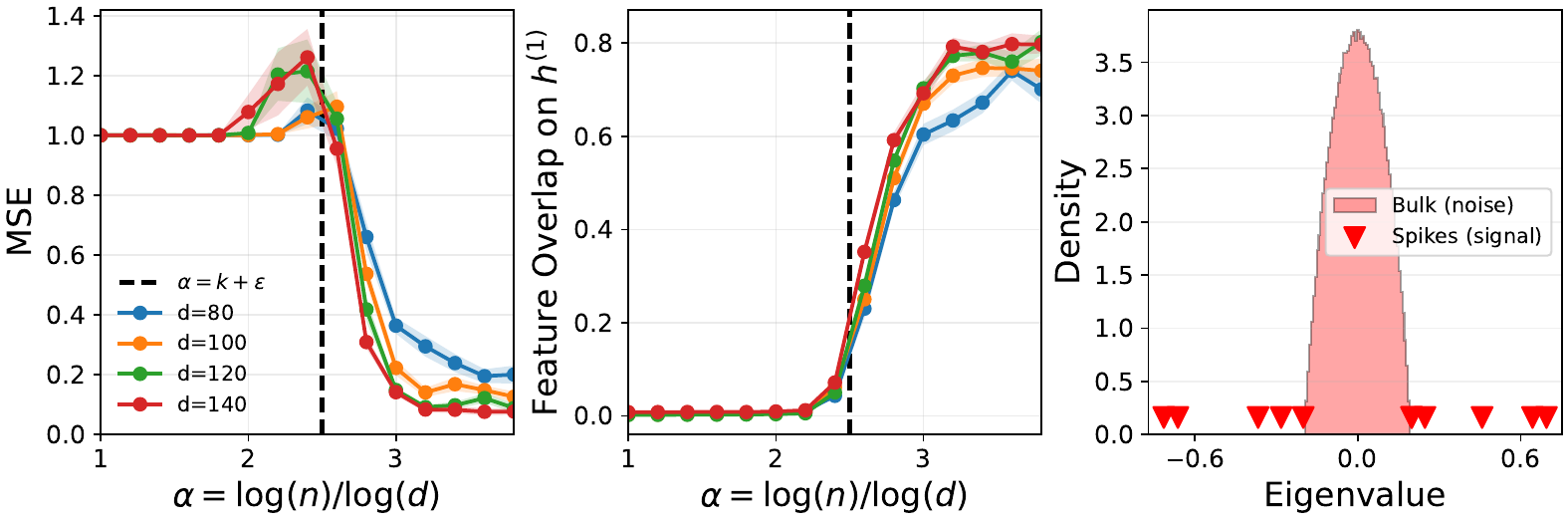}
   \caption{\textbf{Learning with hierarchical spectral methods:} This plot shows the performance of the hierarchical estimator described in Algorithm~\ref{alg:agnostic-recovery} when learning the target~\eqref{eq:main:simple_target} with an identity readout $g^\star(x)=x$. In this case, kernel and shallow methods requires $O(d^4)$ samples. \textbf{Left:} Mean Squared Error (MSE) achieved by the labels predictor $\{\hat y_\mu\}_{\mu=1}^n$ versus normalized number of samples $\alpha$ for different input dimensions $d = \{80,100,120,140\}$. The latent features' dimension is fixed to $d^\epsilon = \sqrt{d}$. The MSE drops significantly at the theoretically predicted threshold $d = \mathcal{O}(d^{k+\epsilon}) = \mathcal{O}(d^{2.5})$ in agreement with Theorem~\ref{thm:matrix_conc_2}. \textbf{Center:} Evaluation of the learned representations $\{\widehat{h}^{(1)}_\mu\}_{\mu = 1}^n$ measuring an overlap with the ground truth (Details in Appendix~\ref{sec:app:numerics}). Similarly to the behaviour of the MSE, the overlap grows significantly at the predicted threshold $d^{2.5}$. \textbf{Right:} Spectrum of the second-order matrix $\hat{C}^{(1)}_2$ in eq.~\eqref{eq:main:chat1} for a fixed $d = 100$ and $\alpha = 3$. The density of eigenvalues presents a clear separation in a supported bulk (noise) plus separate $d^\epsilon = 10$ spikes (signal), separating from the bulk (noise), as predicted by the theory.}
   \label{fig:main_fig_subplots}
\end{figure*}
\paragraph{Recovery of the first-layer features $\bh^{(1)}$--- }
We begin by explaining how the algorithm recovers the first-layer latent features $\bh^{(1)}$ from the input data.

Let $F[H_k(\bx)]$ denote a fixed linear flattening of the order-$k$ Hermite tensor, accounting for symmetry such that the Frobenius inner product is preserved. Therefore,  $F[H_k(\bx)]$ represents a vector living in a space of dimension
\(
\mathcal{B}(d,k) = \mathcal{O}(d^k),
\)
corresponding to the effective dimension of degree-$k$ multivariate Hermite polynomials.
Our estimator constructs the empirical moment matrix
\begin{equation}
\label{eq:C-ell-k}
\widehat C^{(1)}_k
\;:=\;
\frac{1}{n}
\sum_{\mu=1}^n
y_\mu \,
H_2\!\left(F[H_k(\bx_\mu)]\right),
\end{equation}
which can be viewed as a second-order covariance operator acting on this feature space. In the population limit, $\widehat C^{(1)}_k$ exhibits a low-rank structure: its rank is $\mathcal{O}(d^\varepsilon)$, corresponding to the span of the tensors $\{A^{(1)}_i\}_{i=1}^{d^\varepsilon}$ defining the first-layer features.
Under the scaling regime
$
n \gg d^{k} d^\varepsilon
$
and
$d^{k} \gg d^\varepsilon$
this signal subspace separates sharply from noise via a BBP-type spectral transition.
As a consequence, both the correct polynomial degree $k$ and the span of $\{A^{(1)}_i\}_{i =1}^{d^\epsilon}$ can be recovered by simple eigenvalue thresholding. Our analysis relies on the vectors $F[H_k(\bx)]$ asymptotically behaving as Gaussian vectors in dimension $D_1$ which we detail in Section \ref{sec:HGEM}. Below we briefly explain how this equivalence translates to concrete predictions on the number of samples required towards the recovery of the parameters.

Consider the model with  $F[H_k(\bx_\mu)]$ replaced by $\tilde{\bx} \sim \mathcal{N}(0,I_{d_k})$.  For $\mu \in [n]$, let $\bx^\star_\mu,\bx_\mu^\perp$ denote the projection of $\bx_\mu$ on the subspace spanned by $\{A^{(1)}_i\}^{d^\epsilon}_{i=1}$ and its orthogonal complement respectively. With a slight abuse of notation, we denote by $A^{(1)} \in \mathbb{R}^{d_1 \times D_1}$ the matrix defined by stacking the rows obtained by flattening the tensors $\{A^{(1)}_i\}^{d^\epsilon}_{i=1}$.
In the equivalent model, the labels $y_\mu$ then depend on $\tilde{\bx}$ only through projections onto the matrix $A^{(1)}$ i.e through $\bx^\star_\mu$. By the independence of $\bx^\star_\mu,\bx_\mu^\perp$, we obtain the following signal + noise decomposition of $\hat{C}_k^{(1)}$, analogous to the decomposition of spectral estimators for single, multi-index models \cite{lu2020phase,mondelli2018fundamental,defilippis2025optimal,kovavcevic2025spectral}:
\begin{align}
    \hat{C}_k^{(1)} &\simeq \mathbb{E}[\hat C] + \mathbb{V}[\hat C]\, \\
    \mathbb{E}[\hat{C}_k^{(1)}] &= \mathrm{Signal} = \nu_1 A^{(1)\top}A^{(2)}A^{(1)} \label{eq:mean_C1}  \\
    \mathbb{V}[\hat C_k^{(1)}] &= \mathrm{Noise} = \frac{1}{n} \tilde{X}_{\perp} Y \tilde{X}_{\perp}^\top +o_d(1),  \label{eq:var_C1}
\end{align}
where $\nu_1$ denotes the first Hermite coefficient of $g^\star$.
Recall that, by assumption we have $d_1 \ll d^k$. Hence the signal component $S^\star \coloneqq \nu_1 A^{(1)\top}A^{(2)}A^{(1)}$ appears as $d_1$ spikes in the $d_k \times d_k$ matrix $\hat{C}_k^{(1)}$. 

Moreover, under the Gaussian equivalent model, the entries of $\tilde{X}_{\perp}$ are independent of the labels $y_\mu$ and hence the matrix $\frac{1}{n}\tilde{X}_{\perp} Y \tilde{X}_{\perp}^\top$ corresponds to an isotropic bulk.

Our main result then shows that for $d,d_1 \rightarrow \infty$ with $d_1 \ll d$ and $n \gg d^{k}d_1$, the overlaps of the estimator $\hat C$ along $A^{(1)}_1,\cdots,A^{(1)}_{d_1}$ converge to the corresponding overlaps of the ``signal" matrix $S^\star \coloneqq \nu_1 A^{(1)\top}A^{(2)}A^{(1)}$, given by $A^{(1)}S^\star A^{(1)\top} = \nu_1 A^{(2)}$. 

Throughout, our results rely on the following assumptions:
\begin{assumption}\label{ass:1}
    For $i=1,\cdots, d_1$, $A^{(1)}_i$ are symmetric with independent entries $\sim \mathcal{N}(0,\frac{1}{d^k}I_{D_1})$
\end{assumption}
\begin{assumption}\label{ass:2}
   $g^\star:\mathbb{R} \rightarrow \mathbb{R}$ is uniformly Lipschitz and satisfies $\mathbb{E}_{z \sim \mathcal{N}(0,1)}[g^\star(z)z]] \neq 0$.
\end{assumption}

The theorem below is a consequence of the Gaussian equivalence discussed in Sec.~\ref{sec:HGEM} and matrix concentration results:
 
\begin{theorem}\label{thm:matrix_conc_1}
Let $\hat{C}_k^{(1)}$ be as defined in Eq. \ref{eq:C-ell-k}. Then, whp as $d, d_1 \rightarrow \infty$: 
\begin{equation}
\begin{split}
    &  \sqrt{d}_1 \norm{A^{(1)},\hat{C}_k^{(1)} A^{(1)\top} -\nu_1 A^{(2)}}_2 \\&= \tilde{O}\left(\sqrt{\frac{d^{k}d_1}{n}}\right)+ \tilde{O}\left(\frac{d_1}{\sqrt{d}}\right) + \tilde{O}\left(\sqrt {\frac{1}{d_1}}\right),
\end{split}
\end{equation}
where $\tilde{O}$ includes polylogarithmic factors. The $\sqrt{d_1}$ scaling accounts for the normalization $\norm{A^{(2)}}_2 = \mathcal{O}(\frac{1}{\sqrt{d_1}})$.
\end{theorem}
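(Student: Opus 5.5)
We reduce the statement to a concentration bound for a $d_1\times d_1$ matrix. Write $\phi_\mu = F[H_k(\bx_\mu)]\in\R^{D_1}$ and $z_\mu := A^{(1)}\phi_\mu = \bh^{(1)}_\mu\in\R^{d_1}$. Since $H_2(\phi)=\frac{1}{\sqrt2}(\phi\phi^\top - I)$,
\[
A^{(1)}\hat C_k^{(1)}A^{(1)\top} = \frac{1}{\sqrt2\, n}\sum_{\mu=1}^n y_\mu\big(z_\mu z_\mu^\top - A^{(1)}A^{(1)\top}\big).
\]
The object therefore only involves the low-dimensional features $z_\mu$ and the labels $y_\mu = g^\star(h^{(2)}(z_\mu))$. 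We split the error into three pieces: the fluctuation of the empirical average around its expectation, the discrepancy between the true distribution of $z_\mu$ and its Gaussian surrogate, and the deviation of $A^{(1)}A^{(1)\top}$ from the identity. These are meant to produce the three terms $\sqrt{d^kd_1/n}$, $d_1/\sqrt d$ and $1/\sqrt{d_1}$ respectively, after multiplication by $\sqrt{d_1}$.

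\textbf{Step 1 (Gaussian surrogate).} Under Assumption~\ref{ass:1}, conditionally on $A^{(1)}$, the vector $z$ has covariance $A^{(1)}A^{(1)\top}$. This covariance equals $I_{d_1}+E$ with $\|E\|_2=\tilde O(\sqrt{d_1/D_1})$, which is negligible. We then invoke the hierarchical Gaussian equivalence of Sec.~\ref{sec:HGEM}: low-dimensional projections $A^{(1)}F[H_k(\bx)]$ of Hermite features are close to $\mathcal N(0,A^{(1)}A^{(1)\top})$ for test functions of the form $z\mapsto g^\star(h^{(2)}(z))\,zz^\top$. The error, in operator norm after the $\sqrt{d_1}$ rescaling, should be $\tilde O(d_1/\sqrt d)$, the rate of a CLT for degree-$k$ chaos in dimension $d_1$ (fourth-cumulant/Stein bounds scale like $d_1/\sqrt d$). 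This is the step I expect to be the main obstacle. The equivalence must hold uniformly over a growing dimension $d_1=d^\varepsilon$ and for a matrix-valued observable, which is where the restriction $\varepsilon<1/2$ enters. I would first try a Lindeberg/Stein-type argument applied to the bilinear forms $u^\top(\cdot)u$, followed by a net over the sphere of $\R^{d_1}$.

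\textbf{Step 2 (population computation).} For $\tilde z\sim\mathcal N(0,I_{d_1})$, Stein's lemma applied twice gives
\[
\mathbb E\big[g^\star(h^{(2)})\tfrac{1}{\sqrt2}(\tilde z\tilde z^\top-I)\big]=\tfrac{1}{\sqrt2}\mathbb E\big[\nabla^2_z g^\star(h^{(2)})\big].
\]
Here $\nabla^2_z g^\star(h^{(2)}) = g^{\star\prime}(h^{(2)})\sqrt2 A^{(2)} + g^{\star\prime\prime}(h^{(2)})\,2A^{(2)}\tilde z\tilde z^\top A^{(2)}$. Moreover, $h^{(2)}$ is a normalized quadratic form in $\tilde z$ with $\|A^{(2)}\|_F=\Theta(1)$ and $\|A^{(2)}\|_2=O(1/\sqrt{d_1})$, so $h^{(2)}$ is approximately $\mathcal N(0,1)$ with error $O(\|A^{(2)}\|_2/\|A^{(2)}\|_F)=O(1/\sqrt{d_1})$. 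Replacing $\mathbb E[g^{\star\prime}(h^{(2)})]$ by $\nu_1$ and bounding the second-order term by $O(\|A^{(2)}\|_2^2)$ leaves an error of order $1/d_1$ in operator norm. After multiplying by $\sqrt{d_1}$, this yields the $\tilde O(1/\sqrt{d_1})$ term. (For Lipschitz but non-smooth $g^\star$, I would first smooth by Gaussian convolution.)

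\textbf{Step 3 (sampling fluctuation).} We bound $\|\frac1n\sum_\mu (M_\mu-\mathbb E M_\mu)\|_2$, where $M_\mu=y_\mu H_2(z_\mu)$, using a truncated matrix Bernstein inequality. By hypercontractivity of degree-$k$ Hermite chaos, the $z_\mu$ and $y_\mu$ have polylogarithmic tails, so truncation costs only log factors. Applied directly to the $d_1\times d_1$ matrix, Bernstein gives variance proxy $\|\mathbb E M^2\|=O(d_1)$ and hence a deviation of order $\sqrt{d_1/n}$. However, the statement's rate $\sqrt{d^kd_1/n}$ (after the $\sqrt{d_1}$ factor) corresponds to controlling the full $D_1\times D_1$ noise matrix $\frac1n\tilde X_\perp Y\tilde X_\perp^\top$ of Eq.~\eqref{eq:var_C1}, whose operator norm deviation is $\sqrt{D_1/n}$, and then projecting it. I would follow this route, since it is what the rest of the algorithm needs anyway: bound $\|\hat C_k^{(1)}-\mathbb E\hat C_k^{(1)}\|_2=\tilde O(\sqrt{D_1/n})$ by matrix Bernstein in dimension $D_1$, then use $\|A^{(1)}\|_2=O(1)$. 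This gives a bound of at most $\tilde O(\sqrt{d^kd_1/n})$ after the $\sqrt{d_1}$ scaling. Combining the three steps with the triangle inequality concludes the proof.
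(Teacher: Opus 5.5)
Your proposal follows the paper's proof essentially step for step: truncated matrix Bernstein on the full $D_1\times D_1$ matrix (then projection with $\|A^{(1)}\|_2=O(1)$) for the $\sqrt{d^kd_1/n}$ term, comparison of $\bh^{(1)}=A^{(1)}F[H_k(\bx)]$ with its Gaussian surrogate via the multivariate Wiener-chaos CLT (Lemma~\ref{lem:1dclt}) for the $d_1/\sqrt{d}$ term, and the composition-of-Hermite computation for the $1/\sqrt{d_1}$ term.

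The one real difference is that you prove that last step directly, by applying Stein's lemma twice, rather than citing Lemma~\ref{lem:comp_hermite}; this gives an explicit $O(1/d_1)$ operator-norm error when $g^{\star\prime\prime}$ is bounded. For merely Lipschitz $g^\star$, however, naive mollification at scale $\delta$ only gives $O(\delta+1/(d_1\delta))$, i.e.\ $O(1)$ after the $\sqrt{d_1}$ rescaling. There you should instead use the smoothness of the law of $h^{(2)}$ itself, for instance by working in the eigenbasis of $A^{(2)}$ and splitting off one coordinate from an independent, nearly Gaussian remainder.
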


The proof of the theorem is provided in Appendix \ref{app:derivation}.

\begin{remark}[\textbf{Conjectured Extension}]\label{rem:tensor}
While the above result only applies to the overlaps of $\hat{C}_k^{(1)}$ along $A^{(1)}$,  one can more generally show that the estimator $\hat{C}_k^{(1)}$ and the signal matrix $S^\star \coloneqq \nu_1 A^{(1)\top}A^{(2)}A^{(1)}$ are asymptotically equivalent in terms of projections onto ``generic isotropic tensors", that do not possess non-vanishing contractions.
We detail this generalized equivalence and the class of such tensors in Appendix \ref{app:tens_nont}, which includes random tensors $\{A^{(1)}_i\}_{i=1}^{d_1}$ with high probability. We conjecture that the top eigenvectors of $\hat{C}_k^{(1)}$ obey such genericity and hence converge to the top eigenvectors of $S^\star$. Under such a conjecture, the subspace of $A^{(1)}$ can be recovered by thresholding $\hat{C}_k^{(1)}$ upto its top $d_1$ eigenvalues (Algorithm \ref{alg:agnostic-recovery}). 
\end{remark}

\paragraph{Estimation of the second-layer feature $h^{(2)}$  --- }
Once the signal subspace associated with the first layer has been recovered (let $\hat{d}_1$ be the dimension of this subspace), yielding estimates for $\{\hat{A}^{(1)}_i\}_{i=1}^{\hat{d}_1}$, the latent features are estimated by
\begin{equation}
\label{eq:feature-recovery}
\widehat h^{(1)}_{\mu,i}
\;=\;
\big\langle
\widehat A^{(1)}_i,\,
H_{k}(\bx_\mu)
\big\rangle,
\qquad i=1,\dots,\widehat d_1 ,
\end{equation}
yielding the reconstructed representation $\widehat\bh^{(1)}_\mu \in \mathbb{R}^{\widehat d_1}$.

Conditioned on this reconstruction, the estimation of the second-layer feature $h^{(2)}$ reduces to a similar spectral problem in the latent space.  
The algorithm forms the empirical moment matrix
\begin{equation}
\label{eq:main:second_stage_moment}
\widehat C^{(2)}_2
\;:=\;
\frac{1}{n}
\sum_{\mu=1}^n
y_\mu \;
H_2\!\left(\widehat \bh^{(1)}_\mu\right),
\end{equation}
which acts on the order-$2$ Hermite feature space associated with $\widehat\bh^{(1)}$.
As in the first layer, this matrix exhibits a low-rank structure in the population limit: its leading eigenvector aligns with the matrix $A^{(2)} \in \mathbb{R}^{d^\varepsilon \times d^\varepsilon}$.  
Under the condition
\[
n \gg d^{2\varepsilon},
\]
this signal separates from noise via a spectral transition, allowing  recovery of $h^{(2)}$ by eigenvalue thresholding. 

Analogous to Theorem \ref{thm:matrix_conc_1}, for sufficiently large $d$, and small enough $d_1$, the vectors $h^{(1)}_{\mu}$ behave as vectors with jointly independent Gaussian entries in $\mathbb{R}^{d_1}$. Consequently we obtain that the matrix $\widehat C^{(2)}_2$ converges to the order-$2$ Hermite matrix of $y$ as a function of $\bh^{(1)}_\mu$ asymptotically given by $\nu_1^\star A^{(2)}$ (See Lemma \ref{lem:comp_hermite} for discussion).
 
\begin{theorem}\label{thm:matrix_conc_2}
Consider the idealized estimator $\tilde C^{(2)}_2
\;:=\;
\frac{1}{n}
\sum_{\mu=1}^n
y_\mu \;
H_2\!\left(\bh^{(1)}_\mu\right)$ obtained by replacing $\widehat \bh^{(1)}_\mu$ with the true features $\bh^{(1)}_\mu$.
    We have w.h.p as $d,d_1 \rightarrow \infty$:
    \begin{equation}
       \sqrt{d}_1\|\tilde C^{(2)}_2-\nu_1A^{(2)}\|_2 = \tilde{O}\left(\frac{d_1}{\sqrt{d}}\right)+ \tilde{O}\left(\sqrt{\frac{d^{2}_1}{n}}\right)+o_{d_1}(1)
    \end{equation} 
\end{theorem}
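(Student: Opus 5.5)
The plan is to split $\tilde C^{(2)}_2-\nu_1A^{(2)}$ into a Gaussian-replacement error, a population-mismatch error and a sampling fluctuation, which should give the three terms of the bound. Write $\bh^{(1)}_\mu = A^{(1)}F[H_k(\bx_\mu)]$, where $A^{(1)}\in\mathbb{R}^{d_1\times D_1}$ is the stacked matrix. Let $\bm{z}_\mu\sim\mathcal{N}(0,I_{d_1})$ be i.i.d. standard Gaussian surrogates, and set $y^{z}_\mu = g^\star(\langle A^{(2)},H_2(\bm z_\mu)\rangle)$. We decompose
\begin{align}
\tilde C^{(2)}_2-\nu_1A^{(2)} = \big(\tilde C^{(2)}_2-\tilde C^{z}\big) + \big(\tilde C^{z}-\mathbb{E}\tilde C^{z}\big) + \big(\mathbb{E}\tilde C^{z}-\nu_1A^{(2)}\big),
\end{align}
where $\tilde C^{z}=\frac1n\sum_\mu y^{z}_\mu H_2(\bm z_\mu)$. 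Each piece is bounded separately in operator norm, and the total is multiplied by $\sqrt{d_1}$.

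\textbf{Gaussian replacement.} For the first term I would use the hierarchical Gaussian equivalence of Section~\ref{sec:HGEM}, as in the proof of Theorem~\ref{thm:matrix_conc_1}. Under Assumption~\ref{ass:1}, the Gram matrix satisfies $A^{(1)}A^{(1)\top}=I_{d_1}+\Delta$ with $\|\Delta\|_2=\tilde O(\sqrt{d_1/d^k})$. Moreover, the vector $\bh^{(1)}$ is a degree-$k$ chaos whose higher cumulants are controlled by contractions of the $A^{(1)}_i$. For random tensors these contractions are of order $1/\sqrt d$, and a fourth-moment (Stein/Lindeberg) comparison on test matrices then gives a per-entry discrepancy of order $\tilde O(1/\sqrt d)$. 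Summing over the $d_1$ directions of a test vector bounds the matrix-level discrepancy by $\tilde O(\sqrt{d_1}/\sqrt d)$, which after the $\sqrt{d_1}$ scaling becomes the $\tilde O(d_1/\sqrt d)$ term. I expect this step to be the \textbf{main obstacle}. The function $g^\star$ is only Lipschitz, so the comparison must be carried out on the smooth functional $u^\top \tilde C u$ uniformly over an $\varepsilon$-net of $u\in S^{d_1-1}$. This requires concentration strong enough, with log factors, to survive the union bound over $e^{O(d_1)}$ net points. I would first try Lipschitz-plus-hypercontractivity bounds for polynomial chaoses, combined with truncation of $y$ at a polylog level.

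\textbf{Population and fluctuation terms.} For the third term, Stein's lemma (Gaussian integration by parts twice) gives
\begin{align}
\mathbb{E}\tilde C^z=\mathbb{E}\big[g^{\star\prime\prime}(h)\,\nabla h\nabla h^\top\big]/\sqrt2+\mathbb{E}\big[g^{\star\prime}(h)\big]\,\nabla^2 h/\sqrt2 ,
\end{align}
with $\nabla^2 h\propto A^{(2)}$. Since $h^{(2)}$ is a sum of $d_1^2$ weakly dependent terms, it is asymptotically standard Gaussian by the fourth-moment theorem, so $\mathbb{E}[g^{\star\prime}(h)]\to\nu_1$ (this is Lemma~\ref{lem:comp_hermite}). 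The rank-structured first piece has operator norm $O(\|A^{(2)}\|_2^2)=O(1/d_1)$. Combined with the CLT error, this contributes the $o_{d_1}(1)$ after multiplying by $\sqrt{d_1}$. For the second term, I would apply matrix Bernstein to the i.i.d. summands $y^z_\mu H_2(\bm z_\mu)$ after truncation. Each summand has operator norm $\tilde O(d_1)$, and the variance proxy is $\|\mathbb{E}[y^2 H_2(\bm z)^2]\|_2=\tilde O(d_1)$. This gives $\|\tilde C^z-\mathbb{E}\tilde C^z\|_2=\tilde O(\sqrt{d_1/n}+d_1/n)$, and multiplying by $\sqrt{d_1}$ yields $\tilde O(\sqrt{d_1^2/n})$. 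Combining the three estimates gives the statement.
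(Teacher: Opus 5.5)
\paragraph{Gap: the first term of your decomposition.} Your surrogates $\bm z_\mu$ are i.i.d.\ and not coupled to the $\bx_\mu$, so $\tilde C^{(2)}_2-\tilde C^{z}$ is a difference of two independently realized random matrices. A fourth-moment, Stein or Lindeberg comparison (or the Wasserstein CLT of Lemma~\ref{lem:1dclt}) only controls differences of expectations $\mathbb{E}\psi(\bh^{(1)})-\mathbb{E}\psi(\bm z)$, i.e.\ closeness in law. It cannot bound $\norm{\tilde C^{(2)}_2-\tilde C^{z}}_2$ on a given draw. In fact
\[
\tilde C^{(2)}_2-\tilde C^{z}=\big(\tilde C^{(2)}_2-\mathbb{E}\tilde C^{(2)}_2\big)+\big(\mathbb{E}\tilde C^{(2)}_2-\mathbb{E}\tilde C^{z}\big)-\big(\tilde C^{z}-\mathbb{E}\tilde C^{z}\big),
\]
so controlling this term already requires concentration of the \emph{non-Gaussian} estimator around its own mean. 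That is the one step your plan never carries out: you apply matrix Bernstein only to the Gaussian surrogate, where it is easy.

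A pathwise coupling of $\bh^{(1)}_\mu$ with $\bm z_\mu$ does not rescue this. The CLT only supplies a $W_1$ bound, and a per-sample triangle inequality on $y_\mu H_2(\bh^{(1)}_\mu)-y^{z}_\mu H_2(\bm z_\mu)$ multiplies the transport error by $\norm{H_2(\bh^{(1)}_\mu)}_2\asymp d_1$, losing powers of $d_1$. The ``main obstacle'' you identify (a union bound over an $\varepsilon$-net inside the comparison step) is a symptom of this misallocation of randomness, not an intrinsic difficulty.

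\paragraph{The fix is the paper's route.} Concentrate the true estimator and compare only deterministic means, i.e.\ split $\tilde C^{(2)}_2-\nu_1A^{(2)}$ as $(\tilde C^{(2)}_2-\mathbb{E}\tilde C^{(2)}_2)+(\mathbb{E}\tilde C^{(2)}_2-\mathbb{E}\tilde C^{z})+(\mathbb{E}\tilde C^{z}-\nu_1A^{(2)})$.

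For the first piece, apply truncated matrix Bernstein to the actual summands $y_\mu H_2(\bh^{(1)}_\mu)$, on the high-probability event $\norm{\bh^{(1)}_\mu}\le C\sqrt{d_1}$ given by chaos tail bounds. This is where your hypercontractivity and truncation ideas belong, and your Bernstein parameters then carry over to give $\tilde O(\sqrt{d_1/n})$.

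The second piece is deterministic. Bound $\abs{v^\top(\mathbb{E}\tilde C^{(2)}_2-\mathbb{E}\tilde C^{z})v}$ by applying the joint bound $C_kd_1/\sqrt d$ of Lemma~\ref{lem:1dclt} to the (truncated) test function $\bh\mapsto\big((v^\top\bh)^2-\norm{v}^2\big)g^\star(\langle A^{(2)},H_2(\bh)\rangle)$. Because that bound is uniform over Lipschitz test functions, the supremum over $v$ costs nothing and no net is needed.

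Your third piece is a reasonable self-contained substitute for Lemma~\ref{lem:comp_hermite}, which the paper cites. However, Assumption~\ref{ass:2} only makes $g^\star$ Lipschitz, so $g^{\star\prime\prime}$ need not exist (e.g.\ the centered ReLU of the experiments). Integrating by parts once gives $\mathbb{E}[g^\star(h)H_2(\bm z)]=A^{(2)}\,\mathbb{E}[g^{\star\prime}(h)\bm z\bm z^\top]$, and you would still need $\norm{\mathbb{E}[g^{\star\prime}(h)(\bm z\bm z^\top-I)]}_2=o(1)$. The trivial bound is $O(1)$, which after multiplying by $\sqrt{d_1}\norm{A^{(2)}}_2=O(1)$ does not give $o_{d_1}(1)$. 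So you must either exploit the weak dependence of $h$ on any fixed direction or assume $g^\star\in C^2$ with bounded second derivative.
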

While Theorem \ref{thm:matrix_conc_2} directly utilizes the true features $\bh^{(1)}_\mu$, we expect the error bounds to hold for the estimates $\hat{\bh}^{(1)}_\mu$ based on the conjectured equivalence in remark \ref{rem:tensor}.

\begin{figure*}[t]
   \centering
\includegraphics[width=0.9\linewidth]{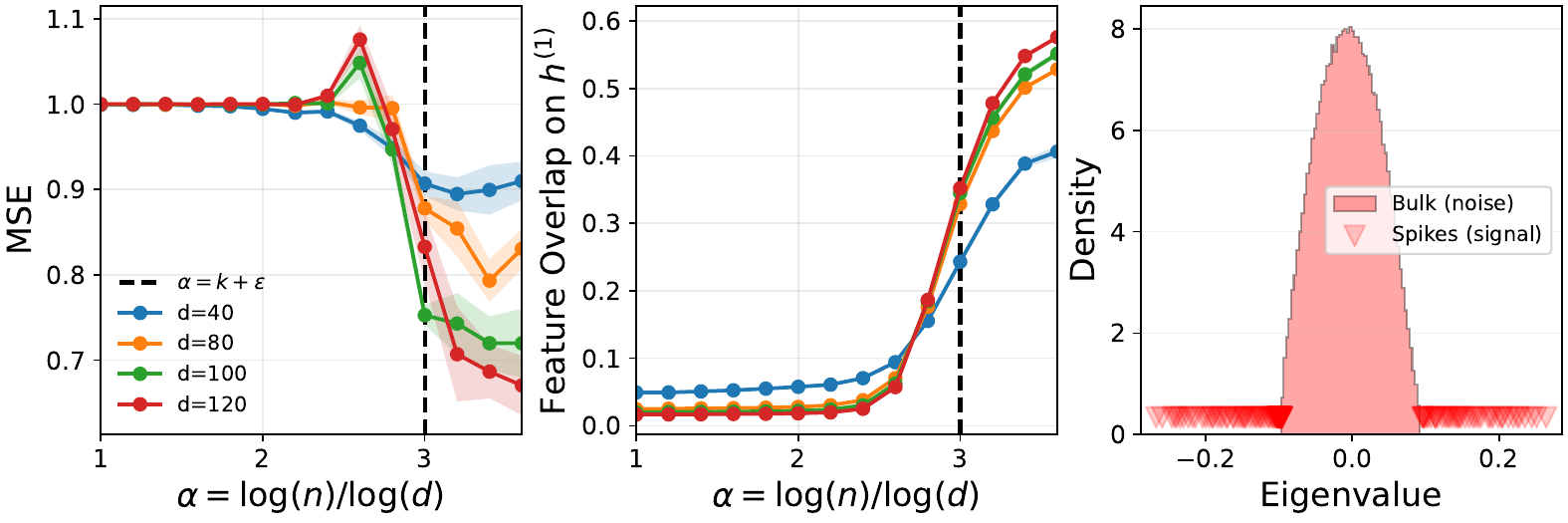}
   \caption{\textbf{On the role of $d^\epsilon$:} The plot shows the performance of the hierarchical estimator described in Algorithm~\ref{alg:agnostic-recovery} when learning a modification of target~\eqref{eq:main:simple_target} seen in Fig.~\ref{fig:main_fig_subplots}. We consider $\epsilon = 1$, therefore, an amount of spikes to learn equal to the ambient dimension $d$.  Mean Squared Error (MSE) and Feature overlap are plotted versus normalized number of samples $\alpha$ for different input dimensions $d = \{40,80,100,120\}$. Spectrum size $d=120$.}
   \label{fig:second_fig_subplots}
\end{figure*}

\paragraph{Fitting of $g^\star$ ---} Once the latent features $\hat{h}^{(2)}$ are recovered, learning the function $g^\star$ amounts to perform a one-dimensional regression problem on $\{\hat{h}^{(2)}_{\mu}, y_\mu\}_{\mu=1}^n$, yielding the estimate for the labels $\{\hat y_\mu\}_{\mu=1}^n$ with standard one-dimensional kernel regression. This step does not affect the overall sample complexity scaling as it requires a number of samples independent of $d$
(e.g., \cite{vapnik1998statistical}).

The complete algorithmic routine, including degree and rank selection via an elbow method, is summarized in Algo.~\ref{alg:agnostic-recovery}.

\paragraph{Examples and illustrations.}
We illustrate the hierarchical spectral procedure on the simplest three-layer target defined in \eqref{eq:main:simple_target},
\begin{align}
y_\mu &= g^\star(h^{(2)}_{\mu}) = g^\star(\langle A^{(2)},\, H_2(\bh^{(1)}_\mu)\rangle) \\
h^{(1)}_{\mu,i} &= \langle A^{(1)}_i,\, H_2(\bx_\mu)\rangle,
\end{align}
In the identity case ($g^\star(h)=h)$ the target $y_\mu$ is quartic in $\bx_\mu$, and estimating it with a single-shot spectral method amounts to generic quartic regression.  
Such approaches naturally rely on fourth-order Hermite tensors and require $n = \mathcal{O}(d^4)$ data. In contrast, the hierarchical spectral estimator proceeds in two quadratic steps.  
At the first stage, it forms the moment matrix
\begin{equation}
\label{eq:main:chat1}
\widehat C^{(1)}_2
=
\frac{1}{n}
\sum_{\mu=1}^n
y_\mu \;
H_2\!\big(F[H_2(\bx_\mu)]\big),
\end{equation}
which acts on the $\mathcal{O}(d^2)$-dimensional space of symmetric matrices.  
In the population limit, $\widehat C^{(1)}_2$ has rank $\mathcal{O}(d^\varepsilon)$, with eigenvectors spanning
$\mathrm{span}\{A^{(1)}_i\}_{i=1}^{d^\varepsilon}$.
From Theorem \ref{thm:matrix_conc_1}, we have that ---provided $n \gg d^2 d^\varepsilon$ ---
this signal subspace separates from noise through a BBP-type transition (see the right panel in Fig.~\ref{fig:main_fig_subplots}), allowing recovery of $\{A^{(1)}_i\}$ by spectral thresholding and yielding reconstructed features
\[
\widehat h^{(1)}_{\mu,i}
=
\langle \widehat A^{(1)}_i,\, H_2(\bx_\mu)\rangle .
\]

Conditioned on this reconstruction, the problem reduces to a second quadratic estimation task in the latent space: estimating $A^{(2)}$ from the processed dataset $\mathcal{D}_1 = \{(\widehat\bh^{(1)}_\mu, y_\mu)\}$.  
By Theorem \ref{thm:matrix_conc_2}, this step requires only $n \gg d^{2\varepsilon}$. As a result, the overall sample complexity scales as
\[
n = \mathcal{O}\!\left(\max\{d^{2+\epsilon},\, d^{2\varepsilon}\}\right),
\]
which is dramatically smaller than the shallow quartic baseline $\mathcal{O}(d^4)$ whenever $d^\varepsilon \ll d^2$. In particular, when $d_1 = d^\varepsilon$ (e.g.\ $\varepsilon=1/2$), this predicts a transition at
$\alpha = 2+\varepsilon$ (e.g.\ $\alpha=2.5$), which is precisely what is observed in the numerical experiments in Fig.~\ref{fig:main_fig_subplots}. Note the high value of the MSE at the interpolation peak ($n = \#(\rm parameters)$) corresponding to the characteristic double descent behavior \cite{Belkin_2019, mei2022generalization}.

Few remarks are in order: 
\begin{itemize}[noitemsep,leftmargin=1em,wide=0pt]
\item The numerical illustrations show that the theoretical predictions given in Theorems~\ref{thm:matrix_conc_1},~\ref{thm:matrix_conc_2} for our hierarchical spectral procedure are valid well beyond the rate for $\varepsilon$ allowed by the rigorous scheme, i.e., $\varepsilon < \frac{1}{2}$.
Fig.~\ref{fig:main_fig_subplots} considers $d^\epsilon = \sqrt{d}$ and we push this observation to the extreme setting where there are as many first-layer latent features as the ambient dimension, i.e., $\epsilon=1$ (See Fig.~\ref{fig:second_fig_subplots}). 
\item The experiments verify that the top-$d_1$ eigenvectors of $\widehat C^{(1)}_2$ lie along the subspace spanned by $A^{(1)}$, thus supporting the conjectured equivalence in Remark \ref{rem:tensor}.
 \item Agreement with theoretical predictions is observed, even at moderate sizes, for general target functions (see with $g^\star = \mathrm{tanh}$ in Fig.~\ref{fig:eps05_gtanh}), or higher order Hermite (we refer to Appendix~\ref{sec:app:numerics} for these additional numerical experiments). 
\end{itemize}

\section{Gaussian Equivalence}\label{sec:HGEM}
Theorems \ref{thm:matrix_conc_1},  \ref{thm:matrix_conc_2} rely on a notion of asymptotic equivalence between the tensors $H_k(\bx)$ and Gaussian vectors in the corresponding dimension $D_1 = \binom{d+k-1}{k}$. In this section, we formalize the corresponding equivalence and sketch how it implies the results.
   \begin{figure*}[t]
   \centering
\includegraphics[width=0.9\linewidth]{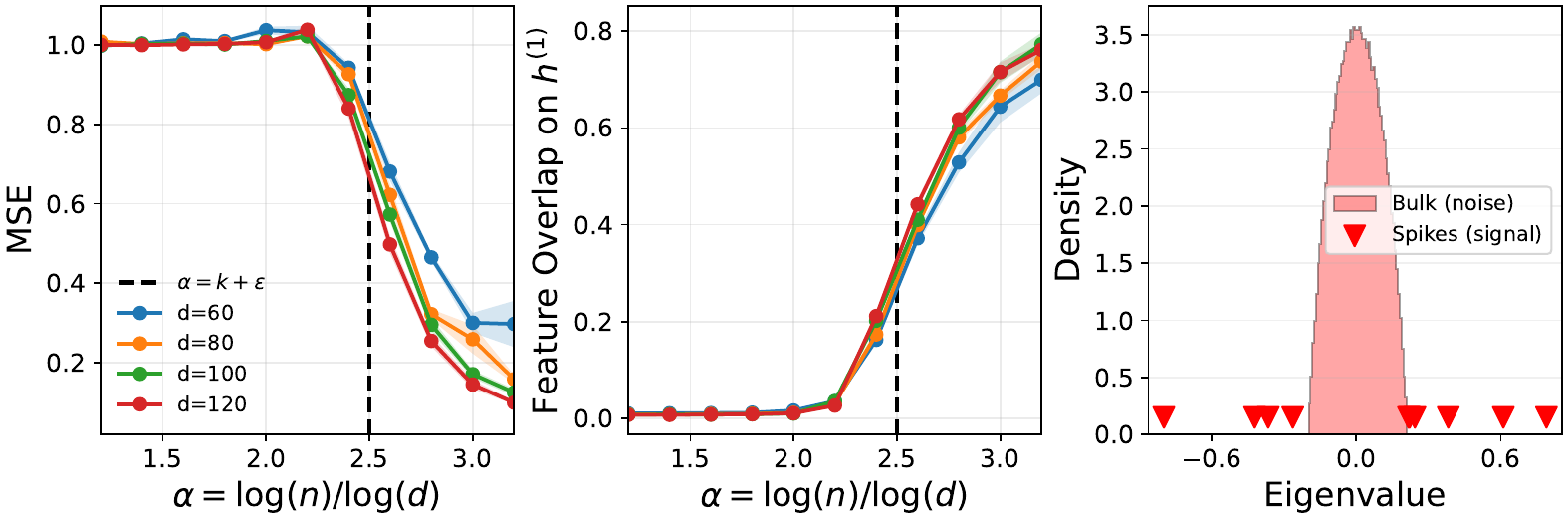}
   \caption{\textbf{On the role of $g^\star$.}
Performance of the hierarchical estimator described in Algorithm~\ref{alg:agnostic-recovery}
when learning a modified version of the target~\eqref{eq:main:simple_target}, as in
Fig.~\ref{fig:main_fig_subplots}.
We consider the nonlinearity $g^\star = \tanh$.
Introducing this additional nonlinearity does not alter the qualitative behavior of the method:
once the first-layer features $\bh^{(1)}$ are learned, estimating $g^\star$ reduces to a
one-dimensional regression problem (see Algorithm~\ref{alg:agnostic-recovery}).
The mean squared error (MSE) and feature overlap are shown as functions of the normalized
sample size $\alpha$, for input dimensions $d \in \{40,80,100,120\}$. Spectrum size $d=120$.
}
   \label{fig:eps05_gtanh}
\end{figure*}
Notions of asymptotic independence for low-degree polynomial functions of Gaussian variables are well established in the literature on Wiener chaos.
In particular, central limit theorems for Wiener chaos imply that, for fixed $k$, linear projections of Hermite tensors of the form
$\langle A, H_k(\bx)\rangle$ converge in distribution to Gaussian random variables as the dimension grows
\citep{nualart2005central,nourdin2009stein} whenever the tensors $A \in \mathbb{R}^{d \otimes_k d }$ have all non-trivial contractions vanishing. 

More generally, many spectral and risk-related quantities associated with polynomial regression models are known to be asymptotically universal with respect to the precise distribution of the polynomial features \cite{hu2024asymptotics,COLTXU2025,wen2025does}. To prove Theorems \ref{thm:matrix_conc_1}, \ref{thm:matrix_conc_2}, it suffices to have such an equivalence for projections along fixed tensors as in the central limit theorems mentioned above.

\paragraph{One-dimensional CLT}
The Lemma below is a direct consequence of the central limit theorem for Wiener Chaos \cite{nualart2005central,nourdin2009stein} (Theorem 4.1 in \cite{wen2025does})
\begin{lemma}\label{lem:1d-clt}
For any $\kappa,c>0$ and bounded Lipschitz $\psi: \mathbb{R} \rightarrow \mathbb{R}$, we have:
\begin{equation}
  \sup_{\substack{
T : \|T \otimes_r T\|_F\!\le\! \frac{\kappa}{d^c} \\
\forall r \in [k-1]
}}\!\!\!\!\!\! {\mathbb{E}[\psi(\langle T, H_k(\bx)\rangle)] \!-\! \mathbb{E}[\psi(\langle F(T), \tilde{\bx}\rangle)]} \!\!\xrightarrow[d \to \infty]{P} \!0,
\end{equation}
where $\tilde{\bx} \sim \mathcal{N}(0,I_{D_1})$ and $F(T)$ denotes the flattened version of the tensor $T$ and $T \otimes_r T$ denotes the symmetrized contraction of order $r$.
\end{lemma}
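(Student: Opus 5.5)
The lemma is a quantitative fourth-moment theorem, so the plan is to pass through a Stein/Malliavin bound in a smooth probability metric and then approximate the Lipschitz test function $\psi$. The steps are as follows.

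\emph{Step 1: fix the variance.} Fix $T$ in the admissible class and set $F_T := \langle T, H_k(\bx)\rangle$, an element of the $k$-th Wiener chaos. Because the flattening preserves the Frobenius product and the Hermite tensors are orthonormal, $\mathbb{E}[F_T^2] = \|T\|_F^2 = \|F(T)\|_2^2 =: \sigma_T^2$. This is exactly the variance of the Gaussian comparison variable $\langle F(T),\tilde{\bx}\rangle \sim \mathcal{N}(0,\sigma_T^2)$. Matching variances is what makes a direct comparison possible; we need no rescaling to unit variance. We will assume $\sigma_T^2$ is bounded, e.g.\ $\|T\|_F \le 1$, which is implicit in the normalization of the paper; otherwise the statement should be read after dividing by $\sigma_T$.

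\emph{Step 2: quantitative fourth-moment bound.} We invoke the Nourdin--Peccati estimate for a $k$-th chaos element $F = I_k(f)$. The key quantity is
\[
\mathrm{Var}\!\big(\tfrac1k\|DF\|^2\big) \;\le\; C_k \sum_{r=1}^{k-1} \|f\otimes_r f\|_F^2 .
\]
Stein's method then gives, for every $\phi\in C^2_b$,
\[
\big|\mathbb{E}\phi(F_T) - \mathbb{E}\phi(\sigma_T Z)\big|
\;\le\; C\|\phi''\|_\infty \sqrt{\mathrm{Var}\!\big(\tfrac1k\|DF_T\|^2\big)}
\;\le\; C'_k\|\phi''\|_\infty\,\kappa\, d^{-c}.
\]
The middle bound is the standard Stein bound; the last follows from the contraction hypothesis. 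Here $f$ is $T$ up to a $\sqrt{k!}$ factor coming from the normalization of $H_k$, and the symmetrized contraction is bounded by the unsymmetrized one, so the hypothesis on $\|T\otimes_r T\|_F$ controls the right-hand side. This is essentially Theorem 4.1 of \cite{wen2025does}, which we may cite directly.

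\emph{Step 3: from smooth to Lipschitz test functions.} Let $\psi$ be bounded and $L$-Lipschitz, and set $\psi_\delta = \psi * \rho_\delta$ for a Gaussian mollifier $\rho_\delta$. Then $\|\psi-\psi_\delta\|_\infty\le L\delta$ and $\|\psi_\delta''\|_\infty \le CL/\delta$. Combining with Step 2 yields
\[
\sup_T\,\big|\mathbb{E}\psi(F_T)-\mathbb{E}\psi(\langle F(T),\tilde{\bx}\rangle)\big|
\;\le\; 2L\delta + \frac{C L \kappa}{\delta\, d^{c}} .
\]
Choosing $\delta = d^{-c/2}$ gives a bound of order $d^{-c/2}$. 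The bound is uniform over the admissible $T$, since its constants depend only on $k$, $\kappa$, $L$ and the variance bound. The supremum therefore tends to zero deterministically, which is stronger than the convergence in probability stated in the lemma.

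\emph{Main obstacle.} The delicate point is Step 2's uniformity. We must track the normalization between $H_k$ (the Rodrigues convention with $\sqrt{k!}$) and the multiple integrals $I_k$, and check that the symmetrized contraction in the hypothesis dominates the quantity appearing in the Malliavin variance bound. The concern is that the contraction hypothesis alone does not bound $\sigma_T$. If a variance bound is not part of the intended class, I would first normalize $F_T/\sigma_T$, apply the above argument to the normalized variable, and restrict to $\sigma_T \asymp 1$.
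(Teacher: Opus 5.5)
Your proposal is correct and takes the same route as the paper, which proves this lemma only by citing the quantitative Wiener-chaos CLT (\cite{nualart2005central,nourdin2009stein}, Theorem 4.1 of \cite{wen2025does}); your Malliavin--Stein bound followed by Gaussian mollification with $\delta=d^{-c/2}$ spells out that citation. The bounded-variance assumption you add is not needed: the smart-path bound $|\mathbb{E}\phi(F_T)-\mathbb{E}\phi(\sigma_T Z)|\le\frac12\|\phi''\|_\infty\,\mathbb{E}|\sigma_T^2-\frac1k\|DF_T\|^2|$ has no $\sigma_T$-dependence, and $\mathrm{Var}(\frac1k\|DF_T\|^2)$ is controlled by the contractions alone, so your estimate is already uniform over the full class in the statement.
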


Lemma \ref{lem:1d-clt} ensures that the features $H_k(\bx)$ behave as independent Gaussian vectors $ \tilde{\bx}$. It extends to the joint law of $r$ features $\langle A_1, H_k(\bx)\rangle,\cdots, \langle A_{d_1}, H_k(\bx)\rangle$ with the error in the joint law scaling as $\sqrt{\frac{d_1}{d}}$. We refer to Appendix \ref{app:derivation} for the general non-asymptotic result derived through the approach in \cite{nourdin2010multivariate}.

While Lemma \ref{lem:1d-clt} implies the asymptotic normality of the projections along tensors $H_k(\bx)$, they do not clarify the propagation of signal from $g^\star$ to $h^{(2
)}(\bx)$ to $h^{(1
)}(\bx)$. Specifically, the estimator $\hat{C}_k^{(1)}$ relies on the second-order Hermite components of $y$ as a function of $h^{(1)}$ lying in the span of $A^{(1)}$. This is a consequence of the following {\it composition of Hermite} lemma:\looseness=-1

\begin{lemma}[Lemma 2 in \cite{wang2023learning}]\label{lem:comp_hermite}
    For $f \in L_2(\gamma)$, let $P_2(f)$ denote the projection of $f$ onto Hermite polynomials of order $2$. Let $\mathbf{u} \in \mathbb{R}^d$ and  $A \in \mathbb{R}^{d \times d}$  with $\norm{A}_F=1+o_d(1)$, $\norm{A}_2= \mathcal{O}(\frac{1}{\sqrt{d}})$. Consider the mapping $\mathcal{F}(\mathbf{u}) \coloneqq g^\star(\mathbf{u}^\top A \mathbf{u})$. We have:
    \begin{equation}
        \sqrt{d}\norm{P_2(\mathcal{F})-\nu_1 A}=o_d(1)
    \end{equation}
\end{lemma}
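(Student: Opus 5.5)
Since $A$ is symmetric, I will diagonalize it as $A = \sum_j \lambda_j \mathbf{v}_j\mathbf{v}_j^\top$ with orthonormal $\mathbf{v}_j$, $\sum_j\lambda_j^2 = 1+o_d(1)$ and $\max_j|\lambda_j| = \mathcal{O}(d^{-1/2})$. Rotation invariance of the Gaussian measure reduces the problem to diagonal $A$. The coordinates $z_j = \langle \mathbf{v}_j,\mathbf{u}\rangle$ are i.i.d.\ $\mathcal{N}(0,1)$, and
\begin{equation}
s := \mathbf{u}^\top A\mathbf{u} = \mathrm{Tr}(A) + \sum_j \lambda_j (z_j^2-1).
\end{equation}
The order-$2$ Hermite projection of $\mathcal{F}$ is identified with the matrix $M := \tfrac{1}{\sqrt 2}\,\mathbb{E}[\mathcal{F}(\mathbf{u})(\mathbf{u}\mathbf{u}^\top - I)]$. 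With the normalization $\langle A, H_2(\mathbf{u})\rangle$ used in the paper, this is exactly the coefficient tensor of $P_2(\mathcal{F})$ (up to this fixed constant convention). In the diagonal basis, $M$ is diagonal, because the sign flip $z_j\mapsto -z_j$ leaves $s$ invariant and kills off-diagonal entries. So it suffices to compute $M_{jj} = \tfrac{1}{\sqrt2}\mathbb{E}[g^\star(s)(z_j^2-1)]$ and show that $\sqrt d\,\max_j |M_{jj} - \nu_1\lambda_j| = o_d(1)$. I read the norm in the statement as the operator norm, consistent with $\norm{A}_2=\mathcal{O}(d^{-1/2})$. I also centre $s$, absorbing $\mathrm{Tr}(A)$ into $g^\star$ or assuming it is $o(1)$, and I take $\nu_1$ to be the first Hermite coefficient of the centred and normalized $s$. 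The variance of the centred $s$ is $2\sum\lambda_j^2 \to 2$; the factor $\sqrt2$ is absorbed into the definition of $\nu_1$.

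The key step is a leave-one-out expansion. Write $s = s_{-j} + \lambda_j(z_j^2-1)$, where $s_{-j}$ is independent of $z_j$. Since $|\lambda_j| \lesssim d^{-1/2}$, Taylor-expand $g^\star$ around $s_{-j}$:
\begin{equation}
\mathbb{E}[g^\star(s)(z_j^2-1)] = \mathbb{E}[g^\star(s_{-j})]\,\mathbb{E}[z_j^2-1] + \lambda_j\,\mathbb{E}[g^{\star\prime}(s_{-j})]\,\mathbb{E}[(z_j^2-1)^2] + R_j .
\end{equation}
The first term vanishes and $\mathbb{E}[(z_j^2-1)^2]=2$, so the main term is $2\lambda_j\mathbb{E}[g^{\star\prime}(s_{-j})]$. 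The remainder satisfies $|R_j| \lesssim \lambda_j^2\,\mathbb{E}[|g^{\star\prime\prime}|\cdot|z_j^2-1|^3] = \mathcal{O}(1/d)$ if $g^\star$ has bounded second derivative. If $g^\star$ is only Lipschitz, as in Assumption~\ref{ass:2}, I would first mollify: replace $g^\star$ by its Gaussian-smoothed version at a scale $\delta\to0$ and control the difference via the Lipschitz constant. Then $\sqrt d\,|R_j| = \mathcal{O}(d^{-1/2})$ uniformly in $j$.

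It remains to show that $\mathbb{E}[g^{\star\prime}(s_{-j})] \to \mathbb{E}[g^{\star\prime}(G)] = \nu_1$ uniformly in $j$, where $G$ is the Gaussian limit of $s$ (Stein's lemma gives $\mathbb{E}[g'(G)] = \mathbb{E}[g(G)G]$). The variable $s_{-j}$ is a weighted sum of centred $\chi^2$ variables with $\max|\lambda_i| \to 0$, so by the Lindeberg CLT, or more quantitatively by the fourth-moment theorem (as in Lemma~\ref{lem:1d-clt} with $k=2$, where $\norm{A\otimes_1 A}_F = \norm{A^2}_F \le \norm{A}_2\norm{A}_F \to 0$), $s_{-j}$ converges to $G$ uniformly in $j$. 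This yields $\mathbb{E}[g^{\star\prime}(s_{-j})] = \nu_1 + o(1)$, hence $M_{jj} = \nu_1\lambda_j(1+o(1)) + \mathcal{O}(1/d)$. Multiplying by $\sqrt d$ and using $|\lambda_j|\lesssim d^{-1/2}$ gives $\sqrt d\,\norm{M - \nu_1 A}_2 = o_d(1)$.

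I expect the main obstacle to be this CLT step at the level of the derivative $g^{\star\prime}$, which is only bounded and measurable when $g^\star$ is merely Lipschitz. Weak convergence alone does not immediately give $\mathbb{E}[g^{\star\prime}(s_{-j})]\to\mathbb{E}[g^{\star\prime}(G)]$. I would avoid differentiating $g^\star$ directly. Instead I would redo the expansion for the mollified $g_\delta$, where the fourth-moment theorem gives a total-variation rate $\lesssim \norm{A^2}_F \lesssim d^{-1/2}$ (Nourdin--Peccati). I would then balance $\delta$ against that rate, using $\norm{g_\delta - g^\star}_\infty \lesssim \delta$ to transfer the conclusion back to $g^\star$.
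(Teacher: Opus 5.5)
Your reduction is sound, and since the paper gives no argument of its own (it only imports the lemma from Wang et al.), it would be a legitimate self-contained route. In the eigenbasis of $A$ the coefficient matrix $M$ is diagonal by the sign-flip symmetry. When $g^\star$ has a bounded second derivative, the leave-one-out Taylor expansion plus the fourth-moment theorem (rate $\|A^2\|_F\le\|A\|_2\|A\|_F=\mathcal{O}(d^{-1/2})$, uniform in $j$) gives $M_{jj}=\sqrt2\,\lambda_j\,\mathbb{E}[g^{\star\prime}(G)]+o(|\lambda_j|)+\mathcal{O}(1/d)$, which is the claim in operator norm. The gap is the merely Lipschitz case of Assumption~\ref{ass:2}, which you flag yourself: the mollification you propose cannot close it. Because of the prefactor $\sqrt d$, every error in $M_{jj}$ must be $o(d^{-1/2})$, i.e.\ of size $o(|\lambda_j|)$. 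For $g_\delta=g^\star*\phi_\delta$ you only have $\|g_\delta''\|_\infty\asymp L/\delta$. After rescaling, your Taylor remainder is therefore of order $\sqrt d\,\lambda_j^2/\delta\asymp(\sqrt d\,\delta)^{-1}$, not $\mathcal{O}(d^{-1/2})$. The transfer bound $|\mathbb{E}[(g_\delta-g^\star)(s)(z_j^2-1)]|\le\|g_\delta-g^\star\|_\infty\,\mathbb{E}|z_j^2-1|\lesssim L\delta$ carries no factor $\lambda_j$, so it costs $\sqrt d\,\delta$. The first bound needs $\sqrt d\,\delta\to\infty$ and the second needs $\sqrt d\,\delta\to0$. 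Their sum is at least a constant for every $\delta$, and the sup-norm estimate cannot be improved at a kink (for ReLU, $g_\delta(0)-g^\star(0)\asymp\delta$). The total-variation rate never enters this trade-off, so balancing $\delta$ against it does not help.

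The missing idea is to extract the factor $\lambda_j$ exactly, using only one derivative, via Gaussian integration by parts in $z_j$. Applying $\mathbb{E}[z\,\varphi(z)]=\mathbb{E}[\varphi'(z)]$ to $\varphi(z_j)=z_j\,g^\star(s)$ gives
\[
\mathbb{E}\bigl[g^\star(s)(z_j^2-1)\bigr]=2\lambda_j\,\mathbb{E}\bigl[z_j^2\,g^{\star\prime}(s)\bigr],
\]
which is valid for Lipschitz $g^\star$. Now condition on $z_j$ and write $s=s_{-j}+c$ with $c=\lambda_j(z_j^2-1)$. Since $g^{\star\prime}$ is bounded by $L$ but only measurable, argue in total variation:
\[
\bigl|\mathbb{E}[g^{\star\prime}(s_{-j}+c)]-\mathbb{E}[g^{\star\prime}(G_{-j})]\bigr|\lesssim L\,\bigl(d_{\mathrm{TV}}(s_{-j},G_{-j})+|c|\bigr).
\]
Here $G_{-j}$ is the centred Gaussian with the variance $\sigma_{-j}^2$ of $s_{-j}$, and $|c|$ bounds the total-variation cost of shifting it. The fourth-moment theorem gives $d_{\mathrm{TV}}(s_{-j},G_{-j})\lesssim d^{-1/2}$, hence $\mathbb{E}[z_j^2 g^{\star\prime}(s)]=\mathbb{E}[g^{\star\prime}(G_{-j})]+\mathcal{O}(d^{-1/2})$ uniformly in $j$. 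Continuity of $\sigma\mapsto\mathbb{E}[g^{\star\prime}(\sigma Z)]=\mathbb{E}[g^\star(\sigma Z)Z]/\sigma$ then handles $\sigma_{-j}^2=2(\|A\|_F^2-\lambda_j^2)\to2$. This closes the argument with no mollification or second derivative. For a polynomial $g^\star$, the other case the paper allows, your Taylor step also goes through once $\|g^{\star\prime\prime}\|_\infty$ is replaced by hypercontractive moment bounds on $s$.
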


Lemma \ref{lem:comp_hermite} combined with Lemma \ref{lem:1d-clt} ensures that $y_\mu$ asymptotically contains Hermite components of order $2$ along $h^{(1
)}(\bx)$ given by $\nu_1 A^{(2)}$
This in-turn ensures that the expectation of $\hat{C}_k^{(1)}$ converges to the projection of these components onto the subspace $A^{(1)}$, resulting in the form $A^{(1)\top}A^{(2)}A^{(1)}$ in Theorem \ref{thm:matrix_conc_1}.
 
\section{Conclusions, discussions, limitations}
We provided a simple and analyzable framework for understanding the computational advantage of depth in learning compositional targets.  
By replacing gradient-based training with hierarchical spectral methods, we made explicit how depth enables staged feature recovery and improved sample complexity in high-dimensional settings. The limitations of our setting, i.e., Gaussian data and Hermite structured target models, are the price paid for analytic tractability. 

A natural direction for future work is to move to deeper layer, beyond Hermite activations, and consider more general non-linearities at each layer. Since the essential requirement is the presence of sufficiently strong low-order correlations between the target and the intermediate representations at each stage \cite{mossel2016deep,dandicomputational} as long as such correlations persist across layers, hierarchical learning should remain possible. The direct connection between our spectral methods and gradient descent dynamics are also an interesting venue. Finally, while our main theorem is proved under the restriction $\varepsilon < \frac{1}{2}$, we numerically show is valid well beyond this regime and it remains an open problem to establish if this condition is needed or it is an artifact of conservative bounds in the analysis.  
Relaxing this restriction, by proving the Conjecture in remark ~\ref{rem:tensor} and establishing sharp guarantees for larger $\varepsilon$, deeper hierarchies, and more generic functions remains an open problem.

\section*{Acknowledgments}
The authors would like to thank Joan Bruna, Alex Damian, Jason Lee, Yue Lu, Theodor Misiakiewicz and Lenka Zdeborova for helpful discussion and feedback. We acknowledge funding from the Swiss National Science Foundation grants  OperaGOST (grant number $200021\ 200390$), DSGIANGO (grant number $225837$), and from the Simons Collaboration on the Physics of Learning and Neural Computation via the Simons Foundation grant ($\#1257412$).

\bibliographystyle{plainnat}
\bibliography{refs_arXiv}

@article{nualart2005central,
  title={Central limit theorems for sequences of multiple stochastic integrals},
  author={Nualart, David and Peccati, Giovanni},
  year={2005}
}

@inproceedings{nourdin2010multivariate,
  title={Multivariate normal approximation using Stein's method and Malliavin calculus},
  author={Nourdin, Ivan and Peccati, Giovanni and R{\'e}veillac, Anthony},
  booktitle={Annales de l'IHP Probabilit{\'e}s et statistiques},
  volume={46},
  number={1},
  pages={45--58},
  year={2010}
}

@article{nourdin2009stein,
  title={Stein’s method on Wiener chaos},
  author={Nourdin, Ivan and Peccati, Giovanni},
  journal={Probability Theory and Related Fields},
  volume={145},
  number={1},
  pages={75--118},
  year={2009},
  publisher={Springer}
}

@article{wen2025does,
  title={When does Gaussian equivalence fail and how to fix it: Non-universal behavior of random features with quadratic scaling},
  author={Wen, Garrett G and Hu, Hong and Lu, Yue M and Fan, Zhou and Misiakiewicz, Theodor},
  journal={arXiv preprint arXiv:2512.03325},
  year={2025}
}

@article{nichani2024provable,
  title={Provable guarantees for nonlinear feature learning in three-layer neural networks},
  author={Nichani, Eshaan and Damian, Alex and Lee, Jason D},
  journal={Advances in Neural Information Processing Systems},
  volume={36},
  year={2024}
}

@article{kingma2014adam,
  title={Adam: A method for stochastic optimization},
  author={Kingma, Diederik P and Ba, Jimmy},
  journal={arXiv preprint arXiv:1412.6980},
  year={2014}
}

@article{allen2019can,
  title={What Can ResNet Learn Efficiently, Going Beyond Kernels?},
  author={Allen-Zhu, Zeyuan and Li, Yuanzhi},
  journal={Advances in Neural Information Processing Systems},
  volume={32},
  year={2019}
}

@article{poggio2017and,
  title={Why and when can deep-but not shallow-networks avoid the curse of dimensionality: a review},
  author={Poggio, Tomaso and Mhaskar, Hrushikesh and Rosasco, Lorenzo and Miranda, Brando and Liao, Qianli},
  journal={International Journal of Automation and Computing},
  volume={14},
  number={5},
  pages={503--519},
  year={2017},
  publisher={Springer}
}

@inproceedings{fu2025learning,
  title={Learning Hierarchical Polynomials of Multiple Nonlinear Features with Three-Layer Networks},
  author={Fu, Hengyu and Wang, Zihao and Nichani, Eshaan and Lee, Jason D.},
  booktitle={Proceedings of the International Conference on Learning Representations (ICLR)},
  year={2025}
}

@article{vershynin2010introduction,
  title={Introduction to the non-asymptotic analysis of random matrices},
  author={Vershynin, Roman},
  journal={arXiv preprint arXiv:1011.3027},
  year={2010}
}

@inproceedings{montanari2022universality,
  title={Universality of empirical risk minimization},
  author={Montanari, Andrea and Saeed, Basil N},
  booktitle={Conference on Learning Theory},
  pages={4310--4312},
  year={2022},
  organization={PMLR}
}

@inproceedings{daniely2017depth,
  title={Depth separation for neural networks},
  author={Daniely, Amit},
  booktitle={Conference on Learning Theory},
  pages={690--696},
  year={2017},
  organization={PMLR}
}

@inproceedings{allen2023backward,
  title={Backward feature correction: How deep learning performs deep (hierarchical) learning},
  author={Allen-Zhu, Zeyuan and Li, Yuanzhi},
  booktitle={The Thirty Sixth Annual Conference on Learning Theory},
  pages={4598--4598},
  year={2023},
  organization={PMLR}
}

@InProceedings{pmlr-v49-telgarsky16,
  title = 	 {benefits of depth in neural networks},
  author = 	 {Telgarsky, Matus},
  booktitle = 	 {29th Annual Conference on Learning Theory},
  pages = 	 {1517--1539},
  year = 	 {2016},
  editor = 	 {Feldman, Vitaly and Rakhlin, Alexander and Shamir, Ohad},
  volume = 	 {49},
  series = 	 {Proceedings of Machine Learning Research},
  address = 	 {Columbia University, New York, New York, USA},
  month = 	 {23--26 Jun},
  publisher =    {PMLR}
}

@article{bonnaire2025role,
  title={The role of the time-dependent Hessian in high-dimensional optimization},
  author={Bonnaire, Tony and Biroli, Giulio and Cammarota, Chiara},
  journal={Journal of Statistical Mechanics: Theory and Experiment},
  volume={2025},
  number={8},
  pages={083401},
  year={2025},
  publisher={IOP Publishing}
}

@inproceedings{gerace2020generalisation,
  title={Generalisation error in learning with random features and the hidden manifold model},
  author={Gerace, Federica and Loureiro, Bruno and Krzakala, Florent and M{\'e}zard, Marc and Zdeborov{\'a}, Lenka},
  booktitle={International Conference on Machine Learning},
  pages={3452--3462},
  year={2020},
  organization={PMLR}
}

@article{el2008spectrum,
  title={Spectrum estimation for large dimensional covariance matrices using random matrix theory},
  author={El Karoui, Noureddine},
  journal={The Annals of Statistics},
  volume={36},
  number={6},
  year={2008},
  publisher={Institute of Mathematical Statistics}
}

@InProceedings{COLTXU2025,
  title = 	 {Fundamental Limits of Matrix Sensing: Exact Asymptotics, Universality, and Applications},
  author =       {Xu, Yizhou and Maillard, Antoine and Zdeborov\'a, Lenka and Krzakala, Florent},
  booktitle = 	 {Proceedings of Thirty Eighth Conference on Learning Theory},
  pages = 	 {5757--5823},
  year = 	 {2025},
  editor = 	 {Haghtalab, Nika and Moitra, Ankur},
  volume = 	 {291},
  series = 	 {Proceedings of Machine Learning Research},
  month = 	 {30 Jun--04 Jul},
  publisher =    {PMLR}
}

@article{lu2025equivalence,
  title={An equivalence principle for the spectrum of random inner-product kernel matrices with polynomial scalings},
  author={Lu, Yue M and Yau, Horng-Tzer},
  journal={The Annals of Applied Probability},
  volume={35},
  number={4},
  pages={2411--2470},
  year={2025},
  publisher={Institute of Mathematical Statistics}
}

@article{bandeira2025exact,
  title={Exact threshold for approximate ellipsoid fitting of random points},
  author={Bandeira, Afonso S and Maillard, Antoine},
  journal={Electronic Journal of Probability},
  volume={30},
  pages={1--46},
  year={2025},
  publisher={The Institute of Mathematical Statistics and the Bernoulli Society}
}

@article{hu2024asymptotics,
  title={Asymptotics of random feature regression beyond the linear scaling regime},
  author={Hu, Hong and Lu, Yue M and Misiakiewicz, Theodor},
  journal={arXiv preprint arXiv:2403.08160},
  year={2024}
}

@article{dandi2023universality,
  title={Universality laws for gaussian mixtures in generalized linear models},
  author={Dandi, Yatin and Stephan, Ludovic and Krzakala, Florent and Loureiro, Bruno and Zdeborov{\'a}, Lenka},
  journal={Advances in Neural Information Processing Systems},
  volume={36},
  pages={54754--54768},
  year={2023}
}

@inproceedings{defilippis2024dimension,
  title={Dimension-Free Deterministic Equivalents and Scaling Laws for Random Feature Regression},
  author={Defilippis, Leonardo and Loureiro, Bruno and Misiakiewicz, Theodor},
  booktitle={Advances in Neural Information Processing Systems},
  year={2024}
}

@article{xiao2022precise,
  title={Precise learning curves and higher-order scalings for dot-product kernel regression},
  author={Xiao, Lechao and Hu, Hong and Misiakiewicz, Theodor and Lu, Yue and Pennington, Jeffrey},
  journal={Advances in Neural Information Processing Systems},
  volume={35},
  pages={4558--4570},
  year={2022}
}

@article{sejnowski2020unreasonable,
  title={The unreasonable effectiveness of deep learning in artificial intelligence},
  author={Sejnowski, Terrence J},
  journal={Proceedings of the National Academy of Sciences},
  volume={117},
  number={48},
  year={2020},
  publisher={National Acad Sciences}
}

@inproceedings{mhaskar2017and,
  title={When and why are deep networks better than shallow ones?},
  author={Mhaskar, Hrushikesh and Liao, Qianli and Poggio, Tomaso},
  booktitle={Proceedings of the AAAI conference on artificial intelligence},
  volume={31},
  year={2017}
}

@article{aubin2018committee,
  title={The committee machine: Computational to statistical gaps in learning a two-layers neural network},
  author={Aubin, Benjamin and Maillard, Antoine and Krzakala, Florent and Macris, Nicolas and Zdeborov{\'a}, Lenka and others},
  journal={Advances in Neural Information Processing Systems},
  volume={31},
  year={2018}
}

@article{barbier2019optimal,
  title={Optimal errors and phase transitions in high-dimensional generalized linear models},
  author={Barbier, Jean and Krzakala, Florent and Macris, Nicolas and Miolane, L{\'e}o and Zdeborov{\'a}, Lenka},
  journal={Proceedings of the National Academy of Sciences},
  volume={116},
  number={12},
  pages={5451--5460},
  year={2019},
  publisher={National Acad Sciences}
}

@inproceedings{maillard2022construction,
  title={Construction of optimal spectral methods in phase retrieval},
  author={Maillard, Antoine and Krzakala, Florent and Lu, Yue M and Zdeborov{\'a}, Lenka},
  booktitle={Mathematical and Scientific Machine Learning},
  pages={693--720},
  year={2022},
  organization={PMLR}
}

@article{lu2020phase,
  title={Phase transitions of spectral initialization for high-dimensional non-convex estimation},
  author={Lu, Yue M and Li, Gen},
  journal={Information and Inference: A Journal of the IMA},
  volume={9},
  number={3},
  pages={507--541},
  year={2020},
  publisher={Oxford University Press}
}

@inproceedings{mondelli2018fundamental,
  title={Fundamental limits of weak recovery with applications to phase retrieval},
  author={Mondelli, Marco and Montanari, Andrea},
  booktitle={Conference On Learning Theory},
  pages={1445--1450},
  year={2018},
  organization={PMLR}
}

@inproceedings{dandicomputational,
  title={The Computational Advantage of Depth in Learning High-Dimensional Hierarchical Targets},
  author={Dandi, Yatin and Pesce, Luca and Zdeborova, Lenka and Krzakala, Florent},
  booktitle={The Thirty-ninth Annual Conference on Neural Information Processing Systems},
year = {2025}
}

@article{mossel2016deep,
  title={Deep learning and hierarchal generative models},
  author={Mossel, Elchanan},
  journal={arXiv preprint arXiv:1612.09057},
  year={2016}
}

@inproceedings{cagnetta2024towards,
  title={Towards a Theory of How the Structure of Language is Acquired by Deep Neural Networks},
  author={Cagnetta, Francesco and Wyart, Matthieu},
  booktitle={Advances in Neural Information Processing Systems},
  year={2024}
}

@article{mehta2014exact,
  title={An exact mapping between the variational renormalization group and deep learning},
  author={Mehta, Pankaj and Schwab, David J},
  journal={arXiv preprint arXiv:1410.3831},
  year={2014}
}

@article{wilson1971renormalization,
  title={Renormalization group and critical phenomena. II. Phase-space cell analysis of critical behavior},
  author={Wilson, Kenneth G},
  journal={Physical Review B},
  volume={4},
  number={9},
  pages={3184},
  year={1971},
  publisher={APS}
}

@article{li2018neural,
  title={Neural network renormalization group},
  author={Li, Shuo-Hui and Wang, Lei},
  journal={Physical review letters},
  volume={121},
  number={26},
  pages={260601},
  year={2018},
  publisher={APS}
}

@article{marchand2023multiscale,
  title={Multiscale Data-Driven Energy Estimation and Generation},
  author={Marchand, Tanguy and Ozawa, Misaki and Biroli, Giulio and Mallat, Stéphane},
  journal={Physical Review X},
  volume={13},
  number={4},
  year={2023},
  publisher={American Physical Society}
}

@inproceedings{ba2020generalization,
  title={Generalization of two-layer neural networks: An asymptotic viewpoint},
  author={Ba, Jimmy and Erdogdu, Murat and Suzuki, Taiji and Wu, Denny and Zhang, Tianzong},
  booktitle={International conference on learning representations},
  year={2020}
}

@article{ghorbani2021linearized,
  title={Linearized two-layers neural networks in high dimension},
  author={Ghorbani, Behrooz and Mei, Song and Misiakiewicz, Theodor and Montanari, Andrea},
  journal={The Annals of Statistics},
  volume={49},
  number={2},
  year={2021}
}

@article{ghorbani2020neural,
  title={When do neural networks outperform kernel methods?},
  author={Ghorbani, Behrooz and Mei, Song and Misiakiewicz, Theodor and Montanari, Andrea},
  journal={Advances in Neural Information Processing Systems},
  volume={33},
  pages={14820--14830},
  year={2020}
}

@article{mei2022generalization,
  title={Generalization error of random feature and kernel methods: hypercontractivity and kernel matrix concentration},
  author={Mei, Song and Misiakiewicz, Theodor and Montanari, Andrea},
  journal={Applied and Computational Harmonic Analysis},
  volume={59},
  pages={3--84},
  year={2022},
  publisher={Elsevier}
}

@article{wang2023learning,
  title={Learning hierarchical polynomials with three-layer neural networks},
  author={Wang, Zihao and Nichani, Eshaan and Lee, Jason D},
  journal={arXiv preprint arXiv:2311.13774},
  year={2023}
}

@article{baik2005phase,
  title={Phase transition of the largest eigenvalue for nonnull complex sample covariance matrices},
  author={Baik, Jinho and Ben Arous, G{\'e}rard and P{\'e}ch{\'e}, Sandrine},
  year={2005}
}

@article{dhifallah2020precise,
  title={A precise performance analysis of learning with random features},
  author={Dhifallah, Oussama and Lu, Yue M},
  journal={arXiv preprint arXiv:2008.11904},
  year={2020}
}

@inproceedings{goldt_gaussian_2021,
  title     = {The Gaussian equivalence of generative models for learning with shallow neural networks},
  author    = {Goldt, Sebastian and Loureiro, Bruno and Reeves, Galen and Krzakala, Florent and Mezard, Marc and Zdeborova, Lenka},
  year      = 2022,
  booktitle = {Proceedings of the 2nd Mathematical and Scientific Machine Learning Conference},
  pages     = {426--471}
}

@article{mei_generalization_2022,
  title   = {The Generalization Error of Random Features Regression: Precise Asymptotics and the Double Descent Curve},
  author  = {Mei, Song and Montanari, Andrea},
  year    = 2022,
  journal = {Communications on Pure and Applied Mathematics},
  volume  = 75,
  number  = 4,
  pages   = {667--766},
  eprint  = {https://onlinelibrary.wiley.com/doi/pdf/10.1002/cpa.22008}
}

@article{rahimi2007random,
  title={Random features for large-scale kernel machines},
  author={Rahimi, Ali and Recht, Benjamin},
  journal={Advances in neural information processing systems},
  volume={20},
  year={2007}
}

@article{BenArous2021,
  author  = {Gerard {Ben Arous} and Reza Gheissari and Aukosh Jagannath},
  title   = {Online stochastic gradient descent on non-convex losses from high-dimensional inference},
  journal = {Journal of Machine Learning Research},
  year    = {2021},
  volume  = {22},
  number  = {106},
  pages   = {1--51},
}

@article{troiani2024fundamental,
  title={Fundamental limits of weak learnability in high-dimensional multi-index models},
  author={Troiani, Emanuele and Dandi, Yatin and Defilippis, Leonardo and Zdeborov{\'a}, Lenka and Loureiro, Bruno and Krzakala, Florent},
  journal={arXiv preprint arXiv:2405.15480},
  year={2024}
}

@article{bietti2022learning,
  title={Learning single-index models with shallow neural networks},
  author={Bietti, Alberto and Bruna, Joan and Sanford, Clayton and Song, Min Jae},
  journal={Advances in Neural Information Processing Systems},
  volume={35},
  pages={9768--9783},
  year={2022}
}

@inproceedings{abbe2023sgd,
  title={Sgd learning on neural networks: leap complexity and saddle-to-saddle dynamics},
  author={Abbe, Emmanuel and Adsera, Enric Boix and Misiakiewicz, Theodor},
  booktitle={The Thirty Sixth Annual Conference on Learning Theory},
  pages={2552--2623},
  year={2023},
  organization={PMLR}
}

@inproceedings{abbe2022merged,
  title        = {The merged-staircase property: a necessary and nearly sufficient condition for sgd learning of sparse functions on two-layer neural networks},
  author       = {Abbe, Emmanuel and Boix-Adsera, Enric  and Misiakiewicz, Theodor},
  booktitle    = {Conference on Learning Theory},
  pages        = {4782--4887},
  year         = {2022},
  organization = {PMLR}
}

@inproceedings{arora2019convergence,
  title={A Convergence Analysis of Gradient Descent for Deep Linear Neural Networks},
  author={Arora, Sanjeev and Cohen, Nadav and Golowich, Noah and Hu, Wei},
  booktitle={7th International Conference on Learning Representations (ICLR)},
  year={2019}
}

@article{lee2019wide,
  title={Wide neural networks of any depth evolve as linear models under gradient descent},
  author={Lee, Jaehoon and Xiao, Lechao and Schoenholz, Samuel and Bahri, Yasaman and Novak, Roman and Sohl-Dickstein, Jascha and Pennington, Jeffrey},
  journal={Advances in neural information processing systems},
  volume={32},
  year={2019}
}

@inproceedings{ji2019gradient,
  title={Gradient Descent Aligns the Layers of Deep Linear Networks},
  author={Ji, Ziwei and Telgarsky, Matus},
  booktitle={Proceedings of the International Conference on Learning Representations (ICLR)},
  year={2019}
}

@article{cagnettarandomhierarchy,
  title = {How Deep Neural Networks Learn Compositional Data: The Random Hierarchy Model},
  author = {Cagnetta, Francesco and Petrini, Leonardo and Tomasini, Umberto M. and Favero, Alessandro and Wyart, Matthieu},
  journal = {Phys. Rev. X},
  volume = {14},
  issue = {3},
  pages = {031001},
  numpages = {24},
  year = {2024},
  month = {Jul},
  publisher = {American Physical Society},
  doi = {10.1103/PhysRevX.14.031001}
}

@inproceedings{saxe2014exact,
  title={Exact solutions to the nonlinear dynamics of learning in deep linear neural networks},
  author={Saxe, Andrew M. and McClelland, James L. and Ganguli, Surya},
  booktitle={Proceedings of the International Conference on Learning Representations (ICLR)},
  year={2014}
}

@misc{zhang2025neuralnetworks,
      title={Neural Networks Learn Generic Multi-Index Models Near Information-Theoretic Limit}, 
      author={Bohan Zhang and Zihao Wang and Hengyu Fu and Jason D. Lee},
      year={2025},
      eprint={2511.15120},
      archivePrefix={arXiv},
      primaryClass={stat.ML},
      url={https://arxiv.org/abs/2511.15120}, 
}

@article{zhang2021understanding,
  title={Understanding deep learning (still) requires rethinking generalization},
  author={Zhang, Chiyuan and Bengio, Samy and Hardt, Moritz and Recht, Benjamin and Vinyals, Oriol},
  journal={Communications of the ACM},
  volume={64},
  number={3},
  pages={107--115},
  year={2021},
  publisher={ACM New York, NY, USA}
}

@article{dandi2024twolayer,
  author  = {Yatin Dandi and Florent Krzakala and Bruno Loureiro and Luca Pesce and Ludovic Stephan},
  title   = {How Two-Layer Neural Networks Learn, One (Giant) Step at a Time},
  journal = {Journal of Machine Learning Research},
  year    = {2024},
  volume  = {25},
  number  = {349},
  pages   = {1--65}
}

@inproceedings{damian2024computational,
  title={Computational-Statistical Gaps in Gaussian Single-Index Models},
  author={Damian, Alex and Pillaud-Vivien, Loucas and Lee, Jason D. and Bruna, Joan},
  booktitle={Proceedings of the 37th Annual Conference on Learning Theory (COLT)},
  year={2024}
}

@article{arnaboldi2024repetita,
  title={Repetita iuvant: Data repetition allows sgd to learn high-dimensional multi-index functions},
  author={Arnaboldi, Luca and Dandi, Yatin and Krzakala, Florent and Pesce, Luca and Stephan, Ludovic},
  journal={arXiv preprint arXiv:2405.15459},
  year={2024}
}

@inproceedings{lee2024neural,
  title={Neural Network Learns Low-Dimensional Polynomials with SGD Near the Information-Theoretic Limit},
  author={Lee, Jason D. and Oko, Kazusato and Suzuki, Taiji and Wu, Denny},
  booktitle={Advances in Neural Information Processing Systems},
  year={2024}
}

@article{kovavcevic2025spectral,
  title={Spectral estimators for multi-index models: Precise asymptotics and optimal weak recovery},
  author={Kova{\v{c}}evi{\'c}, Filip and Zhang, Yihan and Mondelli, Marco},
  journal={arXiv preprint arXiv:2502.01583},
  year={2025}
}

@article{defilippis2025optimal,
  title={Optimal spectral transitions in high-dimensional multi-index models},
  author={Defilippis, Leonardo and Dandi, Yatin and Mergny, Pierre and Krzakala, Florent and Loureiro, Bruno},
  journal={arXiv preprint arXiv:2502.02545},
  year={2025}
}

@book{vapnik1998statistical,
  title     = {Statistical Learning Theory},
  author    = {Vapnik, Vladimir N.},
  year      = {1998},
  publisher = {Wiley},
  address   = {New York}
}

@article{Belkin_2019,
	doi = {10.1073/pnas.1903070116},  
	year = 2019,
	month = {jul},
	publisher = {Proceedings of the National Academy of Sciences},
	volume = {116},
	number = {32},
  
	pages = {15849--15854},
  
	author = {Mikhail Belkin and Daniel Hsu and Siyuan Ma and Soumik Mandal},
  
	title = {Reconciling modern machine-learning practice and the classical bias{\textendash}variance trade-off},
  
	journal = {Proceedings of the National Academy of Sciences}
}

@inproceedings{martens2015optimizing,
  title={Optimizing neural networks with {K}ronecker-factored approximate curvature},
  author={Martens, James and Grosse, Roger},
  booktitle={International Conference on Machine Learning (ICML)},
  pages={2408--2417},
  year={2015},
  organization={PMLR}
}

@inproceedings{anil2020scalable,
  title={Scalable second order optimization for deep learning},
  author={Anil, Rohan and Gupta, Vineet and Koren, Tomer and Regan, Kevin and Singer, Yoram},
  booktitle={International Conference on Learning Representations (ICLR)},
  year={2020}
}

@inproceedings{gupta2018shampoo,
  title={Shampoo: Preconditioned stochastic tensor optimization},
  author={Gupta, Vineet and Koren, Tomer and Singer, Yoram},
  booktitle={International Conference on Machine Learning (ICML)},
  pages={1842--1850},
  year={2018},
  organization={PMLR}
}

\newpage
\appendix
\onecolumn

\section{Derivation of theoretical claims}\label{app:derivation}

\subsection{Tensors with vanishing contractions}\label{app:tens_nont}

For two symmetric tensors $S \in \mathbb{R}^{\otimes k}, T \in \mathbb{R}^{\otimes \ell}$ the symmetric contraction of order $r$ is defined as :

\[
(S \otimes_r T)_{i_1 \dots i_{k-r}\, j_1 \dots j_{\ell-r}}
=
\sum_{a_1,\dots,a_r = 1}^d
S_{i_1 \dots i_{k-r}\, a_1 \dots a_r}\,
T_{j_1 \dots j_{\ell-r}\, a_1 \dots a_r}.
\]

The condition on $T$ in Lemma \ref{lem:1d-clt} then states that the Frobenius norm of all non-trivial contractions i.e contractions of order $1,\cdots, k-1$ vanish as $d\rightarrow \infty$. For $k=2$, the condition is equivalent to a bound on the operator norm of the matrix. It is easy to verify that the condition is satisfied for typical ``isotropic" tensors such as tensors with independent Gaussian entries or tensors sampled uniformly over a fixed Frobenius norm (Lemma 5 in \cite{wen2025does}). We require analogous conditions for the joint normality of multiple projections $\langle A_i, H_k(x)\rangle,\cdots, \langle A_r, H_k(x)\rangle$. As we discuss next, this is again ensured through bounds on all cross-contractions over pairs of $A_i,\cdots, A_r$

\subsection{Non-asymptotic joint-CLT for Wiener Chaos}

Recall the definition the $1$-Wasserstein distance (Kantorovich-Rubinstein) on $\mathbb R^r$ associated with the Euclidean norm:
\[
d_W(G,Z) := \sup\Big\{ \big|\mathbb E[h(G)]-\mathbb E[h(Z)]\big| : h:\mathbb R^r\to\mathbb R,\ \mathrm{Lip}(h)\le 1 \Big\}.
\]

\begin{lemma}\label{lem:1dclt}
For $r \in \mathbb{N}$, let $A_1,\cdots, A_i \in \mathbb{R}^{\otimes k}$ be independent symmetric tensors of order $k$ with i.i.d entries distributed as $\mathcal{N}(0,\frac{1}{d}^k)$. Let $Z\sim \mathcal{N}(0,I_r)$.
   There exists a constant $C_k<\infty$, depending only on $k$, such that, for large enough $d$:
\[
d_W([\langle A_i, H_k(x)\rangle,\cdots, \langle A_r, H_k(x)\rangle],Z) \le C_k\,\frac{r}{\sqrt d}.
\]
\end{lemma}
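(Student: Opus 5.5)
We would deduce this from the multivariate Stein/Malliavin bound of \cite{nourdin2010multivariate} (see also \cite{nourdin2009stein}), applied to $G=(G_1,\dots,G_r)$ with $G_i:=\langle A_i,H_k(\bx)\rangle = I_k(A_i)/\sqrt{k!}$, an element of the $k$-th Wiener chaos (up to the normalisation of $H_k$). For a vector of $k$-th chaos elements with covariance matrix $\Sigma$, that result bounds the $1$-Wasserstein distance to $\mathcal N(0,\Sigma)$ by
\[
C\,\|\Sigma^{-1}\|_{op}\|\Sigma\|_{op}^{1/2}\Big(\sum_{i,j=1}^r \mathbb E\big[(\Sigma_{ij}-\tfrac1k\langle DG_i,-DL^{-1}G_j\rangle)^2\big]\Big)^{1/2},
\]
and, by the product formula, each summand is bounded by $C_k\sum_{s=1}^{k-1}\big(\|A_i\otimes_s A_i\|_F^2+\|A_j\otimes_s A_j\|_F^2\big)$ (Nourdin--Peccati--R\'eveillac, Prop.~3.7-type estimate). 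We then also need to pass from $\mathcal N(0,\Sigma)$ to $Z\sim\mathcal N(0,I_r)$, which costs $d_W(\mathcal N(0,\Sigma),\mathcal N(0,I_r))\le \|\Sigma^{1/2}-I_r\|_F\le\|\Sigma-I_r\|_F$.

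The proof thus reduces to random-tensor estimates on $A_1,\dots,A_r$. First, the covariance: $\Sigma_{ij}=\langle A_i,A_j\rangle$ (with the symmetric normalisation implicit in $F$). Since the entries are $\mathcal N(0,d^{-k})$ over $\sim d^k$ coordinates, $\Sigma_{ii}=1+O(d^{-k/2})$ and $\Sigma_{ij}=O(d^{-k/2})$ for $i\neq j$ w.h.p.; hence $\|\Sigma-I_r\|_F\lesssim r\,d^{-k/2}\le r/\sqrt d$, and for $r\ll d^{k/2}$ we get $\|\Sigma\|_{op},\|\Sigma^{-1}\|_{op}\le 2$. (We interpret the lemma's large-$d$ statement as ``with high probability over the $A_i$'', and we need $r\le d^{k/2}$; if $\Sigma$ is not well conditioned the bound $C_k r/\sqrt d$ exceeds $O(1)$ anyway, and since $d_W\le \mathbb E\|G\|+\mathbb E\|Z\|\lesssim\sqrt r$, the statement is then trivial.) Second, the contractions: for $1\le s\le k-1$, $A_i\otimes_s A_i$ is a Gaussian chaos of order two in the entries of $A_i$. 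Its mean is the ``trace'' term of order $d^{s}\cdot d^{-k}\cdot(\text{identity on } k-s\text{ pairs})$, of Frobenius norm $\asymp d^{s-k}d^{(k-s)/2}=d^{-(k-s)/2}\le d^{-1/2}$. Its fluctuation has squared Frobenius norm $\asymp d^{2(k-s)}\cdot d^{s}\cdot d^{-2k}=d^{-s}\le d^{-1}$. Hanson--Wright concentration and a union bound over $i\le r$ and $s$ then give $\max_{i,s}\|A_i\otimes_s A_i\|_F\le C_k d^{-1/2}$ w.h.p., which is exactly the condition of Lemma \ref{lem:1d-clt} with $c=1/2$.

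Plugging in, the double sum over $(i,j)$ has $r^2$ terms, each $O(d^{-1})$, so its square root is $O(r/\sqrt d)$; adding the covariance correction gives $d_W(G,Z)\le C_k r/\sqrt d$. The main obstacle is the first step: pinning down the precise form of the Nourdin--Peccati--R\'eveillac bound, in particular that the off-diagonal terms $\mathrm{Var}(\langle DG_i,-DL^{-1}G_j\rangle)$ are controlled by the self-contractions $\|A_i\otimes_s A_i\|_F\|A_j\otimes_s A_j\|_F$ (via $\|A_i\otimes_s A_j\|_F^2=\langle A_i\otimes_{k-s}A_i,A_j\otimes_{k-s}A_j\rangle$) rather than requiring separate cross-contraction estimates. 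If that route gives a worse dependence on $r$, we would instead bound the cross-contractions $\|A_i\otimes_s A_j\|_F$ directly by the same Hanson--Wright argument, which yields the identical order $d^{-1/2}$.
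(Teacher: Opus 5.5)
Your proposal is correct and follows the same route as the paper: it applies the Nourdin--Peccati--R\'eveillac multivariate Stein--Malliavin bound (their Corollary~3.6), expands the Malliavin inner products via the product formula into contractions of order $1,\dots,k-1$, each of which contributes $O(d^{-1})$, and sums over the $r^2$ pairs to get $r/\sqrt d$. Your version is more careful than the paper's sketch: it accounts for the covariance mismatch through $\|\Sigma-I_r\|_F$; it reduces cross-contractions to self-contractions via $\|A_i\otimes_s A_j\|_F^2=\langle A_i\otimes_{k-s}A_i,A_j\otimes_{k-s}A_j\rangle$; and it correctly keeps the deterministic part $d^{s-k}\,\mathrm{Id}$ of $A\otimes_s A$, whose squared norm is of order $d^{-(k-s)}$ and which Lemma~\ref{lem:contract-scaling} omits when it claims $\Theta(d^{-s})$ (though this term is still $O(d^{-1})$).
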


\begin{proof}
The proof follows through Corollary 3.6 in \cite{nourdin2010multivariate}, which bounds the Wasserstein distance between joint distributions of vectors of the form $\langle A_i, H_k(x)\rangle,\cdots, \langle A_r, H_k(x)\rangle$ in terms of inner-products between their Malliavin derivatives.
\end{proof}

We recall the following Lemma from \cite{nualart2005central}, the Malliavin derivatives for functionals $\langle A_i, H_k(x)\rangle,\cdots, \langle A_r, H_k(x)\rangle$ are related to the contractions between pairs of tensors:
\begin{lemma}[Lemma 2 in \cite{nualart2005central}]\label{lem:maliavin}
For any $k,\ell \ge 1$, $S \in (\mathbb{R}^d)^{\odot k}$, and
$T \in (\mathbb{R}^d)^{\odot \ell}$,
\[
\bigl(D \langle S, H_k(\bx)\rangle\bigr)^{\top}\bigl(D  \langle T, H_\ell(\bx)\rangle \bigr)
= k \ell \sum_{r=1}^{\min(k,\ell)} (r-1)!
\binom{k-1}{r-1}\binom{\ell-1}{r-1}
\, \langle H_{k+\ell-2r}(\bx),\!\left(S \,{\otimes}_r\, T\right)\rangle,
\]
where $D$ denotes the Malliavin derivative.
\end{lemma}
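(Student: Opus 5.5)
The plan is to identify each functional $\langle S, H_k(\bx)\rangle$ with a multiple Wiener--It\^o integral $I_k(S)$ of the symmetric kernel $S$ with respect to the isonormal Gaussian process on $\mathbb{R}^d$ generated by $\bx$. With that identification, the lemma becomes the classical combination of the derivative rule for multiple integrals with the product formula. I will fix the normalization once: $\langle S, H_k(\bx)\rangle$ is read as $I_k(S)$, absorbing the $\sqrt{k!}$ in the Rodrigues formula into the kernel, which is the convention under which the stated constants hold. I will then check this convention on $k=1$ and $k=2$, where $I_1(v)=v^\top\bx$ and $I_2(A)=\bx^\top A\bx-\mathrm{Tr}(A)$. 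Under it, $\langle H_{k+\ell-2r}(\bx), S\otimes_r T\rangle$ on the right-hand side means $I_{k+\ell-2r}$ of the symmetrization of $S\otimes_r T$.

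\emph{Step 1 (derivative).} For symmetric $S$ we have $D_j I_k(S)=k\, I_{k-1}(S_{\cdot j})$, where $S_{\cdot j}\in(\mathbb{R}^d)^{\odot(k-1)}$ is the slice of $S$ with one index fixed to $j$. This follows from differentiating the Hermite polynomial representation, $\partial_j$ lowering the degree by one, or equivalently from the definition of $D$ on the chaos. Consequently
\[
(D I_k(S))^\top (D I_\ell(T)) = k\ell \sum_{j=1}^d I_{k-1}(S_{\cdot j})\, I_{\ell-1}(T_{\cdot j}).
\]

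\emph{Step 2 (product formula).} I will apply the multiplication formula
\[
I_p(f)\,I_q(g)=\sum_{s=0}^{\min(p,q)} s!\binom{p}{s}\binom{q}{s}\, I_{p+q-2s}(f\otimes_s g)
\]
with $p=k-1$, $q=\ell-1$, $f=S_{\cdot j}$, $g=T_{\cdot j}$. Summing over $j$ merges the shared index $j$ with the $s$ contracted indices, so that $\sum_j S_{\cdot j}\otimes_s T_{\cdot j}=S\otimes_{s+1}T$. Linearity of $I_{p+q-2s}$ lets me pull the sum over $j$ inside. Reindexing $r=s+1$ gives exactly the coefficient $(r-1)!\binom{k-1}{r-1}\binom{\ell-1}{r-1}$ and chaos order $k+\ell-2r$, for $r=1,\dots,\min(k,\ell)$.

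\emph{Main obstacle.} The only real difficulty is bookkeeping of normalizations. The tensors $H_k(\bx)$ carry a $1/\sqrt{k!}$ factor, whereas the product formula is stated for $I_k$. I will therefore state the lemma in the $I_k$ convention, or equivalently rescale $S,T$ by $\sqrt{k!},\sqrt{\ell!}$ and the output kernel accordingly, and verify the constants on the case $k=\ell=1$: the left side is $S^\top T$ and the right side is $1\cdot\langle H_0, S\otimes_1 T\rangle=S^\top T$. A second point to handle is that $S\otimes_r T$ need not be symmetric. Since $I_m$ only sees the symmetrization of its kernel, and the bracket against the symmetric tensor $H_{k+\ell-2r}$ likewise does, no correction arises from this.
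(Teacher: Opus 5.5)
Your proposal is correct and is the standard derivation behind this result: the chaos derivative rule $D_jI_k(S)=kI_{k-1}(S_{\cdot j})$, then the product formula, then merging $\sum_j S_{\cdot j}\otimes_s T_{\cdot j}=S\otimes_{s+1}T$. The paper gives no proof of its own and only cites Nualart--Peccati. Your normalization caveat is essential rather than cosmetic. With the paper's $1/\sqrt{k!}$-normalized $H_k$, the identity as written already fails for $k=\ell=2$: the left side $2\bx^\top ST\bx$ has mean $2\langle S,T\rangle$, while the right side has mean $4\langle S,T\rangle$. So the statement holds only in the unnormalized $I_k$ convention you adopt, and your $k=\ell=1$ sanity check cannot detect this because every normalization factor equals one in that case.
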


Hence, it suffices to bound the Frobenius norm of the contractions of matrices $\{A_i\}_{i=1}^r$. This is shown in the following Lemma:

\begin{lemma}\label{lem:contract-scaling}.
Let $A,B$ be independent tensors in $(\mathbb R^d)^{\otimes k}$ with i.i.d.\ entries $\mathcal N(0,d^{-k})$.
Then for each $s\in\{1,\dots,k\}$,
\[
\mathbb E\|A\otimes_s B\|_{\mathrm{F}}^2 = \Theta(d^{-s}).
\]
While for the self-contractions, $E\|A\otimes_s A\|_{\mathrm{F}}^2 = \Theta(d^{-s})$ for $s\in\{1,\dots,k-1\}$ and $E\|A\otimes_k A\|_{\mathrm{F}}^2 = 1$.
\end{lemma}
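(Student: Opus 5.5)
The plan is a direct second-moment computation. I would expand the Frobenius norm entrywise, evaluate the expectation with Wick's (Isserlis') theorem, and count how many free indices survive each pairing. Write multi-indices $I,J\in[d]^{k-s}$ and $a,a'\in[d]^s$, so that $(A\otimes_s B)_{I,J}=\sum_a A_{Ia}B_{Ja}$ and
\[
\mathbb E\|A\otimes_s B\|_{\mathrm F}^2=\sum_{I,J}\sum_{a,a'}\mathbb E[A_{Ia}A_{Ia'}B_{Ja}B_{Ja'}].
\]

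\textbf{Independent case.} Because $A$ and $B$ are independent, the expectation factorizes as $\mathbb E[A_{Ia}A_{Ia'}]\,\mathbb E[B_{Ja}B_{Ja'}]$. The covariance $\mathbb E[A_{Ia}A_{Ia'}]$ is nonzero only when $(I,a')$ is a permutation of $(I,a)$, since $A$ is symmetric and its independent entries are indexed by multisets. For fixed $(I,a)$ there are at most $k!$ such $a'$, and each covariance is of order $d^{-k}$; entries with repeated indices carry combinatorial factors depending only on $k$. The sum is therefore dominated, up to constants depending only on $k$, by the diagonal $a'=a$. This gives
\[
\#\{I,J,a\}\cdot d^{-2k}=d^{2(k-s)}\,d^s\,d^{-2k}=d^{-s}.
\]
The lower bound comes from the diagonal terms with all indices distinct, of which there are a fraction $1-O(1/d)$, so the result is $\Theta(d^{-s})$.

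\textbf{Self-contraction.} Now $\mathbb E[A_{Ia}A_{Ja}A_{Ia'}A_{Ja'}]$ splits into three Wick pairings, again up to $k$-dependent symmetrization constants.
\begin{itemize}
\item The pairing $(Ia,Ia')(Ja,Ja')$ forces $a'\approx a$ and gives $d^{2(k-s)}d^{s}d^{-2k}=d^{-s}$, exactly as in the independent case.
\item The pairing $(Ia,Ja)(Ia',Ja')$ forces $J\approx I$ and equals $\|\mathbb E[A\otimes_s A]\|_{\mathrm F}^2$. Its order is $d^{k-s}(d^s\cdot d^{-k})^2=d^{-(k-s)}$.
\item The pairing $(Ia,Ja')(Ja,Ia')$ forces both $J\approx I$ and $a'\approx a$, giving $d^{k}d^{-2k}=d^{-k}$, which is negligible.
\end{itemize}
For $s=k$ the contraction is the scalar $\|A\|_{\mathrm F}^2$. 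Its mean is $\sum_{\text{entries}}d^{-k}=1+O(1/d)$ and its variance is $O(d^{-k})$, so $\mathbb E\|A\otimes_k A\|^2=1+o(1)$; with a normalization that makes $\mathbb E\|A\|_{\mathrm F}^2=1$ exactly, this equals $1$.

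\textbf{Main obstacle.} I expect the main difficulty to be the second pairing in the self-contraction, the mean term $\|\mathbb E[A\otimes_s A]\|_{\mathrm F}^2$ of order $d^{-(k-s)}$. It is dominated by $d^{-s}$ only when $s\le k/2$. For $k/2<s\le k-1$ the honest rate is $\Theta(d^{-\min(s,k-s)})$, which still vanishes, and that is all Lemma~\ref{lem:1d-clt} and Lemma~\ref{lem:1dclt} need. I would first try to remove this term by working with the centered contraction, or by noting that in Lemma~\ref{lem:maliavin} the $r=k$ term is a constant. If neither works, I would state the self-contraction bound as $\Theta(d^{-\min(s,k-s)})$.

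A secondary, routine point is the bookkeeping of the symmetrization multiplicities. These only change constants depending on $k$, and I would handle them by bounding each orbit size between $1$ and $k!$.
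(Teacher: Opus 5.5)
Your cross-contraction computation is essentially the paper's proof. Both expand $\|A\otimes_s B\|_{\mathrm F}^2$ entrywise, factor the expectation by independence, keep the diagonal $a'=a$ terms, and count $d^{2(k-s)}\cdot d^{s}\cdot d^{-2k}=d^{-s}$. Your orbit bookkeeping for symmetric tensors only makes the constants precise.

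For the self-contractions the paper gives no computation, only ``Analogously''. Your Wick analysis is correct, and it exposes an error in the statement rather than a gap in your argument. Write $J\sim I$ for ``$J$ is a permutation of $I$''. Then $\mathbb E[(A\otimes_s A)_{I,J}]\asymp d^{s-k}\,\mathbf{1}\{J\sim I\}$, so by Jensen
\[
\mathbb E\|A\otimes_s A\|_{\mathrm F}^2\ \ge\ \|\mathbb E[A\otimes_s A]\|_{\mathrm F}^2\ \asymp\ d^{-(k-s)} .
\]
This exceeds $d^{-s}$ whenever $s>k/2$. For example, $k=3$, $s=2$ gives $\Theta(d^{-1})$, not $\Theta(d^{-2})$. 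There is also a Wick-free check: for symmetric $A$,
\[
\|A\otimes_s A\|_{\mathrm F}^2=\sum_{I,J,a,a'}A_{Ia}A_{Ja}A_{Ia'}A_{Ja'}=\|A\otimes_{k-s}A\|_{\mathrm F}^2 .
\]
So a rate $\Theta(d^{-s})$ cannot hold at both $s$ and $k-s$ unless $s=k/2$. Your three-pairing total $d^{-s}+d^{-(k-s)}+d^{-k}$ has exactly this symmetry.

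You can therefore skip the rescue attempts. The centered contraction is a different quantity from the one in the statement. The constant $r=k$ term in Lemma~\ref{lem:maliavin} says nothing about the terms with $r<k$. State the corrected rate $\Theta(d^{-\min(s,k-s)})$. As you note, $\min(s,k-s)\ge 1$ gives $\max_{1\le s\le k-1}\mathbb E\|A\otimes_s A\|_{\mathrm F}^2=O(d^{-1})$. That is all Lemma~\ref{lem:1d-clt} and the $C_k r/\sqrt d$ rate of Lemma~\ref{lem:1dclt} need, so nothing downstream breaks.

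One small slip at $s=k$: $\mathbb E\|A\|_{\mathrm F}^4=(\mathbb E\|A\|_{\mathrm F}^2)^2+\mathrm{Var}(\|A\|_{\mathrm F}^2)$. So even with $\mathbb E\|A\|_{\mathrm F}^2=1$ exactly, the value is $1+\Theta(d^{-k})$, not exactly $1$. The paper's ``$=1$'' likewise holds only as $1+o(1)$.
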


\begin{proof}
Fix $s\in\{1,\dots,k-1\}$.
For each pair of free multi-indices $(a,b)\in[d]^{k-s}\times[d]^{k-s}$,
\[
(A\otimes_s B)_{a,b}=\sum_{m\in[d]^s} A_{a,m}B_{b,m}.
\]
Since $A$ and $B$ have independent centered coordinates 
\begin{align}
\mathbb E\big[(A\otimes_s B)_{a,b}^2\big]
&=
\sum_{m,n\in[d]^s}\mathbb E[A_{a,m}A_{a,n}]\ \mathbb E[B_{b,m}B_{b,n}] \\
&=
\sum_{m\in[d]^s}\mathrm{Var}(A_{a,m})\ \mathrm{Var}(B_{b,m})
=
d^s\cdot d^{-k}\cdot d^{-k}
=
d^{-(2k-s)}.
\end{align}
There are $\Theta(d^{2(k-s)})$ such pairs $(a,b)$, hence
\[
\mathbb E\|A\otimes_s B\|_{\mathrm{F}}^2
=
\sum_{a,b}\mathbb E\big[(A\otimes_s B)_{a,b}^2\big]
=
d^{2(k-s)}\cdot d^{-(2k-s)}
=
d^{-s}.
\]
Analogously, we obtain the corresponding scaling for self-contractions.

By substituting the above scaling in Lemma \ref{lem:maliavin}, we obtain that the terms $\bigl(D \langle A_i, H_k(\bx)\rangle\bigr)^{\top}\bigl(D  \langle A_j, H_\ell(\bx)\rangle \bigr)$ scale as $\mathcal{O}(\frac{1}{d})$ when $i\neq j$ and $1-\mathcal{O}(\frac{1}{d})$ for $i=j$. Since the number of pairs $i,j$ is $\mathcal{O}(r^2)$, applying Corollary 3.6 in \cite{nourdin2010multivariate} gives the $\frac{r}{\sqrt{d}}$ bound in Lemma \ref{lem:1dclt}.
\end{proof}

\subsection{Tail bounds on non-linear features}

In what followa, we will require the following control over the norms of $H_k(\bx)$:

\begin{lemma}[Lemma  F.4  in \cite{wen2025does}]\label{lem:tail}
     $\exists$ constants $c, C, C' >0$ such that with probability $1-Ce^{-cd}$:
     \begin{equation}
         \norm{F[H_k(\bx_\mu)]}^2 \leq C'd^k
     \end{equation}
\end{lemma}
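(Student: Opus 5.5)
The plan is to bound $\norm{F[H_k(\bx)]}$ deterministically on the event that $\norm{\bx}$ is of order $\sqrt{d}$, and then control that event with a standard $\chi^2$ tail bound. Since the flattening $F$ preserves the Frobenius inner product, it suffices to show $\norm{H_k(\bx)}_F^2 \le C' d^k$ on the event $\{\norm{\bx}^2 \le 2d\}$.

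\textbf{Step 1: explicit expansion of the Hermite tensor.} From the tensorial Rodrigues formula in the footnote defining $H_k$, I would derive the classical expansion
\begin{equation}
\sqrt{k!}\,H_k(\bx) \;=\; \sum_{j=0}^{\lfloor k/2\rfloor} (-1)^j \frac{k!}{j!\,(k-2j)!\,2^j}\,\mathrm{Sym}\bigl(\bx^{\otimes (k-2j)}\otimes I_d^{\otimes j}\bigr),
\end{equation}
where $\mathrm{Sym}$ denotes averaging over permutations of the $k$ tensor slots. The derivation is by induction on $k$. Writing $G_k := (-1)^k e^{\norm{\bx}^2/2}\nabla^{\otimes k} e^{-\norm{\bx}^2/2}$, differentiating once more gives the recursion $G_{k+1} = \mathrm{Sym}(\bx\otimes G_k) - \mathrm{Sym}(\nabla\otimes G_k)$. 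Each $\nabla$ acting on a factor $\bx$ produces an identity factor, and matching coefficients then yields the combinatorial constants. This is the only step requiring real care, but the constants only need to be bounded by some $C_k$, so any loss in exact bookkeeping is harmless.

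\textbf{Step 2: Frobenius bounds term by term.} Symmetrization is an average of slot permutations, each of which is an isometry in Frobenius norm, so $\norm{\mathrm{Sym}(T)}_F \le \norm{T}_F$. Frobenius norms are multiplicative under tensor products, and $\norm{I_d}_F=\sqrt d$. Together these give
\begin{equation}
\norm{\mathrm{Sym}(\bx^{\otimes (k-2j)}\otimes I_d^{\otimes j})}_F \le \norm{\bx}^{k-2j}\, d^{j/2}.
\end{equation}
By the triangle inequality,
\begin{equation}
\norm{H_k(\bx)}_F \le C_k \sum_{j} \norm{\bx}^{k-2j} d^{j/2}.
\end{equation}
On the event $\norm{\bx}^2\le 2d$, the $j$-th term is at most $2^{k/2} d^{(k-2j)/2 + j/2} = 2^{k/2} d^{(k-j)/2} \le 2^{k/2} d^{k/2}$. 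Hence $\norm{H_k(\bx)}_F^2 \le C' d^k$ with $C'$ depending only on $k$.

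\textbf{Step 3: probability of the good event.} Since $\norm{\bx}^2\sim\chi^2_d$, the Laurent--Massart inequality (or Bernstein's inequality for sub-exponential variables) gives $\mathbb{P}(\norm{\bx}^2 > 2d)\le e^{-cd}$ for a universal constant $c>0$. This yields the claim with probability at least $1-Ce^{-cd}$.

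If one needs the bound simultaneously for all $\mu\in[n]$ with $n$ polynomial in $d$, a union bound costs only a factor $n$, which is absorbed into $e^{-cd}$ after adjusting $c$.
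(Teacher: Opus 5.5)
Your argument is correct. The paper gives no proof of this lemma; it imports the statement as Lemma F.4 of \cite{wen2025does}. So your proposal is a self-contained route where the paper relies on a citation.

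You reduce everything to a deterministic estimate on the event $\{\norm{\bx}^2\le 2d\}$. The recursion $G_{k+1}=\bx\otimes G_k-\nabla\otimes G_k$ shows that $G_k$ is a $k$-dependent finite combination of symmetrized $\bx^{\otimes(k-2j)}\otimes I_d^{\otimes j}$, so the exact coefficients are indeed irrelevant. Slot permutations are Frobenius isometries and $\norm{I_d}_F=\sqrt d$, which gives $\norm{H_k(\bx)}_F\le C_k d^{k/2}$. The only randomness left is a $\chi^2_d$ tail.

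Compared with the citation, your approach buys three things:
\begin{itemize}
\item It works directly with the paper's own normalization of $H_k$ (the Rodrigues footnote) and of $F$ (the Frobenius-preserving flattening). You therefore never need to check that the reference's conventions match.
\item It holds for every fixed $k$ with explicit constants.
\item It shows that the union bound over $n=\mathrm{poly}(d)$ samples costs nothing.
\end{itemize}

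It also yields more than the lemma states. The $j=0$ term is $\bx^{\otimes k}/\sqrt{k!}$, and on $\{d/2\le\norm{\bx}^2\le 2d\}$ the remaining terms are $O(d^{(k-1)/2})$. The same expansion therefore gives the matching lower bound $\norm{F[H_k(\bx)]}^2\ge c_k d^k$ with probability $1-e^{-cd}$, using the lower $\chi^2_d$ tail. This two-sided $\Theta(d^k)$ statement is what the paper actually invokes in the proof of Theorem~\ref{thm:matrix_conc_1}, and the lemma as stated does not provide it. The citation route is shorter and leaves all the bookkeeping to the reference.
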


\subsection{Proof of Theorem \ref{thm:matrix_conc_1}}

We first consider the variance component. We note that the matrix $\widehat C^{(1)}_k
\;:=\;
\frac{1}{n}
\sum_{\mu=1}^n
y_\mu \,
H_2\!\left(F[H_k(\bx_\mu)]\right)$ consists a sum of $n$ rank-one components of the form $y_\mu F[H_k(\bx_\mu)](F[H_k(\bx_\mu)])^\top$. We bound the variance in operator norm, following the proof of Lemma F.5 in \cite{wen2025does}. Recall that Lemma \ref{lem:tail} ensures that $\norm{F[H_k(\bx_\mu)]}^2$ is $\Theta(d^k)$ whp as $d \rightarrow \infty$. Under 
such an event, we further have that $y_\mu$ are uniformly bounded by some constant $>0$.

Thus:
\begin{equation}
    \|\mathbb{E}[y_\mu F[H_k(\bx_\mu)] F[H_k(\bx_\mu)]^\top \mathbf{1}_{\norm{F[H_k(\bx_\mu)]}^2 \leq C'd^k}\| \leq \frac{Cd^k}{n},
\end{equation}
for some constant $C>0$.

Hence, upon applying Matrix Bernstein inequality \cite{vershynin2010introduction} to the above truncated rank-one components, we obtain that w.h.p as $d \rightarrow \infty$:
\begin{equation}
    \norm{\widehat C^{(1)}_k-\mathbb{E}[\widehat C^{(1)}_k]}_2 = \tilde{\mathcal{O}}\left(\sqrt{\frac{d^{k}}{n}}\right)
\end{equation}

Next we move on to the expectation. Let $\widehat C^{(1)}_g$ denote the corresponding estimator for the equivalent model with $F[H_k(\bx_\mu)]$ replaced by standard Gaussian vectors $\tilde{\bx} \sim \mathcal{N}(0,I_{D_1})$  i.e. $\widehat C^{(1)}_g
\;:=\;
\frac{1}{n}
\sum_{\mu=1}^n
y_\mu \,
H_2\!\left(\tilde{\bx_\mu}\right)$. 
By the definition of $\hat{C}^{(1)}$ and $\hat{C}^{(1)}_g$, we have:
\begin{align}
    \langle A^{(1)}_i(\mathbb{E}[\hat{C}^{(1)}-\hat{C}^{(1)}_g]) A^{(1)}_j \rangle = \mathbb{E}[\langle A^{(1)}_i , H_k(\bx) \rangle y(A^{(1)} F(H_k(\bx))) \langle A^{(1)}_j, H_k(\bx) \rangle] - \mathbb{E}[(\langle A^{(1)}_i \tilde{\bx}) \rangle y(A^{(1)}\tilde{\bx}) \langle A^{(1)}_j \tilde{\bx}) \rangle]
\end{align}
Note that conditioned on the high-probability tail bound in Lemma \ref{lem:tail} and on the norms of $A^{(1)},A^{(2)}$, the terms inside the expectation are uniformly lipschitz functions applied to the set of random variables $\langle A_1 , H_k(\bx) \rangle, \cdots, \langle A_1 , H_k(\bx) \rangle$.

Hence, applying Lemma \ref{lem:1dclt}, we obtain: 
\begin{equation}
    \sqrt{d_1}\norm{A^{(1)}\mathbb{E}[\hat{C}-\hat{C}_g] A^{(1)T}}_2 =  \tilde{\mathcal{O}}(\frac{d_1}{\sqrt{d}})
\end{equation}

Finally, the proof is completely by noting that Lemma \ref{lem:comp_hermite} implies that:
\begin{equation}
    \sqrt{d_1}\norm{\mathbb{E}[\hat{C}_g] -\nu_1 (A^{(1)})^\top A^{(2)} (A^{(1)})} = \mathcal{O}(\frac{1}{\sqrt{d}_1})
\end{equation}

\subsection{Proof of Theorem \ref{thm:matrix_conc_2}}

Similar to the proof of Theorem \ref{thm:matrix_conc_1}, we first consider the noise:
\begin{equation}
    \tilde C^{(2)}_2-\mathbb{E}[ \tilde C^{(2)}_2].
\end{equation}

 Assuming $d_1 \ll d$, by Lemma \ref{lem:tail},  we have that $\norm{h^{(1)}(\bx)} \leq C\sqrt{d_1}$ w.h.p as $d \rightarrow \infty$. Hence, again an application of the Matrix Bernstein inequality to the a truncated rank-one components yields:
\begin{equation}\label{eq:b1}
    \norm{ \tilde C^{(2)}_2-\mathbb{E}[\tilde C^{(2)}_2]} = \tilde{\mathcal{O}}(\sqrt{\frac{d_1}{n}})
\end{equation} 

Let $C^{(2)}_g$ denote the estimator obtained by replacing $h^{(1)}(\bx)$ by independent Gaussian entries $\tilde{\bx} \sim \mathcal{N}(0,I_{d_1})$. 
Let, $\bf v \in \mathbb{R}^{d_1}$ be arbitrary with $\norm{\bf{v}}=\sqrt{d_1}$. Note that:
\begin{equation}
    \bf{v}^\top\mathbb{E}[\tilde{C}^{(2)}_2]\bf{v} = \mathbb{E}[((\bf{v}^\top h^{(1)}(\bx))^2-1) y(h^{(1)}(\bx))]
\end{equation}

Conditioning on the high-probability bounds over $\norm{h^{(1)}(\bx)}$, the mapping $h^{(1)}(\bx) \rightarrow y(h^{(1)}(\bx))$ is uniformly lipschitz.

Hence, we have by Lemma \ref{lem:1dclt}:
\begin{equation}\label{eq:b2}
  \sqrt{d_1}\norm{\mathbb{E}[\tilde C^{(2)}_2]-\mathbb{E}[\tilde C^{(2)}_g]}_2 = \sup_{\bf{v}:\norm{\bf{v}}=\sqrt{d_1}} \abs{\bf{v}^\top\mathbb{E}[\tilde{C}^{(2)}_2]\bf{v} - \bf{v}^\top\mathbb{E}[\tilde{C}^{(2)}_g]\bf{v}} = \tilde{\mathcal{O}}(\frac{d_1}{\sqrt{d}})
\end{equation}

Lastly, we note tbat Lemma \ref{lem:comp_hermite} implies that $\sqrt{d_1}\norm{\mathbb{E}[\tilde C^{(2)}_g]-\nu_1A^{(2)}} = \tilde{\mathcal{O}}(\frac{1}{\sqrt{d}_1})$. Combining with Equations \ref{eq:b1}, \ref{eq:b2} then completes the proof.

\section{Additional numerical experiments}
\label{sec:app:numerics}
In this section, we complement the main text with additional numerical experiments to validate the theoretical claims given in the main text. 

\paragraph{Computing the eigenvectors overlap --}
 While Algorithm~\ref{alg:agnostic-recovery} defines the estimators and the training procedure, it remains to specify the testing protocol. Let us assume that the training has been completed and that we have obtained the estimates $\{\widehat A^{(1)}_i\}_{i=1}^{\hat d^\epsilon}$ and $\widehat A^{(2)}$ of $\{A^{(1)}_i\}_{i=1}^{d^\epsilon}$ and $A^{(2)}$, respectively.

From Eq.~\eqref{eq:mean_C1}, one can observe that $\widehat A^{(1)}$ recovers $A^{(1)}$ up to a rotation $R \in \mathbb{R}^{d_1 \times d_1}$, such that, at the population level,
\begin{equation}
    \widehat A^{(1)} = R\,A^{(1)}.
\end{equation}
A natural performance measure for $\widehat A^{(1)}$ is therefore an \emph{overlap}, which is plotted in the central panels of the figures in the main text, that measures similarity in terms of Frobenius norm $(||\cdot||_F)$
\begin{equation}
    q^{(1)}_{A} = \| A^{(1)} (\widehat A^{(1)})^{\top} \|_F 
\end{equation}

During training, this rotation propagates through the model, so that at the population level we have $\widehat{\bh}^{(1)} = R\,\bh^{(1)}$. Plugging in this expression into $\hat{C}^{(2)}_2$ eq.~\eqref{eq:main:second_stage_moment} and recalling that the Hermite are covariant with rotations ($H_2(R\bx) = R H_2(\bx)R^\top$), the next moment matrix then satisfies
\begin{align}
\label{eq:app:rotation_estimation}
    \widehat C^{(2)}_2
    = \frac{1}{n}\sum_{\mu=1}^n y_\mu H_2(\widehat \bh^{(1)}_\mu)
    = R\left(\frac{1}{n}\sum_{\mu=1}^n y_\mu H_2(\bh^{(1)}_\mu)\right)R^\top ,
\end{align}
and therefore estimates the matrix $R A^{(2)} R^\top$ rather than $A^{(2)}$ itself. While, no direct overlap is convenient for this estimator, the rotation vanishes when computing the latent feature $h^{(2)}$ and thus after defining a test dataset, an MSE on $y$ becomes a good measure to test the performance of the model.

\paragraph{Latent features overlap --}
We consider a test dataset $\{(\bx_\mu, y_\mu)\}_{\mu=1}^{n_{\mathrm{test}}}$. Using this dataset, we evaluate the estimators $\widehat{\bh}^{(1)}$ of the latent feature $\bh^{(1)}$. Let us call the feature matrix $\hat{H}^{(1)} = \{\hat \bh^{(1)}_\mu\}_{\mu=1}^{n_{\rm test}} \in \mathbb{R}^{n_{\rm test} \times \hat d^\epsilon}$. We introduce the feature overlap with the ground truth features $H^{(1)} \in \mathbb{R}^{n_{\rm test} \times d^\epsilon}$ 
\begin{equation}
    q^{(1)}_{\bh} = \| \bh^{(1)} (\widehat{\bh}^{(1)})^\top \|_F^2 ,
\end{equation}
which cancels out the rotation $R$ arising in the estimation of $A^{(1)}$ (see eq.~\eqref{eq:app:rotation_estimation}). One can further observe that the rotation $R$ vanishes asymptotically in the computation of the second-layer feature. Indeed, at the population level,
\begin{align}
    \widehat h_\mu^{(2)} = \langle \widehat A^{(2)}, H_2(\widehat \bh^{(1)}_\mu) \rangle_F = \langle R\, A^{(2)} \, R^T, R\,H_2( \bh^{(1)}_\mu) R^T \rangle_F = h_\mu^{(2)}
\end{align}
The panels showing the feature overlap in the main figures have been generated averaging over $10$ different seeds, with error bars given by standard deviation.
\paragraph{Computing the MSE --}
At training time, once the latent features are estimated $\{\hat{h}^{(1)}_\mu, \hat{h}^{(2)}_\mu\}_{\mu=1}^n$ we produce estimates for the labels $\hat{y}_\mu$ by performing standard kernel regression. Calling the effective preprocessed dataset at hand $\mathcal{D}_2 = \{\hat{h}^{(2)}_\mu, y_\mu\}_{\mu=1}^n$ we construct the empirical risk:
\begin{align}
    \mathcal{R}(\mathbf{a};\mathcal{D}_2) = \sum_{\mu=1}^n \langle \mathbf{a}, \mathbf{\varphi}(\hat{h}^{(2)}_\mu)\rangle + \lambda ||\mathbf{a}||_2
\end{align}
where $\mathbf{a} \in \mathbb{R}^p$ is the predictor in the $p-$ dimensional feature space defined by the feature map $\varphi: \mathbb{R} \to \mathbb{R}^p$. By minimizing this empirical risk, we thus obtain our estimates $\hat{y}(h) = \langle \hat{\mathbf{a}}, \varphi(h) \rangle$. The figures in the main are produced using a polynomial kernel estimator of degree $7$ where the regularization parameter is tuned with cross validation over a fixed range $\{10^{-3}, 10^{-4},10^{-5}\}$. 

Thus, we evaluate the learned labels estimate  by considering the Mean Squared Error (MSE) over a fresh test dataset $\mathcal{D}_{\rm test}$: 
\begin{align}
    \mathrm{MSE }= \frac{1}{n}\sum_{\mu=1}^{n_{\rm test}} (\hat y_\mu - y_\mu)^2
\end{align}
Further, we consider the normalized MSE by dividing by the variance of $y_\mu$. 
The panels showing the feature overlap in the main figures have been generated averaging over $10$ different seeds, with error bars given by standard deviation.
\paragraph{Varying the non-linearity.}
In addition to the results shown in the main text, we also consider values of $g^\star$ different from the identity (Fig.~\ref{fig:main_fig_subplots}) and $\tanh$ (Fig.~\ref{fig:eps05_gtanh}). In Fig.~\ref{fig:gcrelu}, we present results for a centered ReLU non-linearity, $g^\star (x)= \mathrm{relu}(x) - \mathbb{E}_{\xi \sim \mathcal{N}(0,1)}[\mathrm{relu}(\xi)]$, illustrating that other functions satisfying the assumptions on $g^\star$ can also be handled by the method.

\begin{figure*}
   \centering
\includegraphics[width=0.7\linewidth]{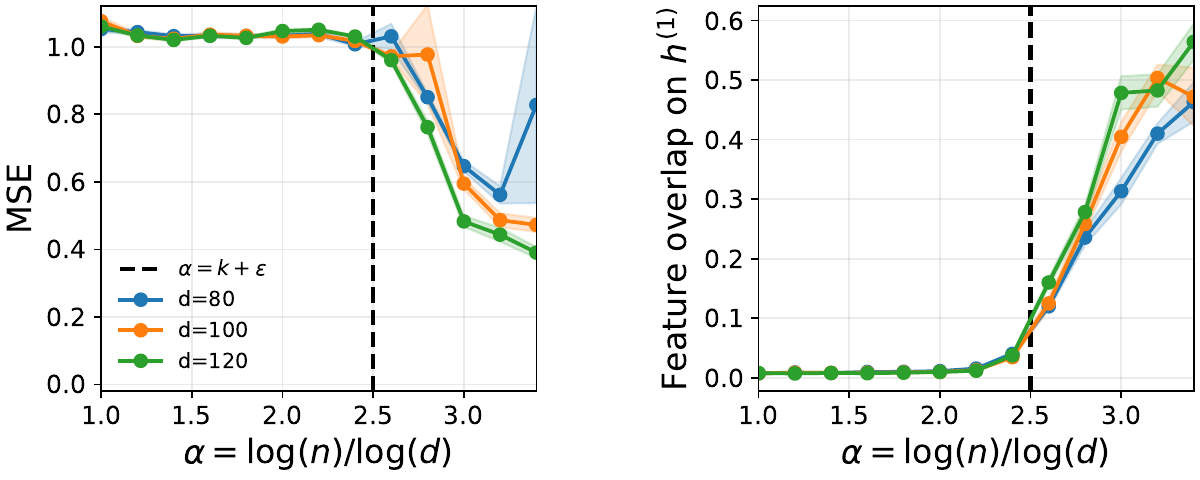}
   \caption{\textbf{Learning with a centered ReLU non-linearity.}
    Performance of the hierarchical estimator described in Algorithm~\ref{alg:agnostic-recovery} when learning the target~\eqref{eq:main:simple_target} with $g^\star$ chosen as a centered ReLU function.
    \textbf{Left:} Mean Squared Error (MSE) of the label predictor $\{\hat y_\mu\}_{\mu=1}^n$ as a function of the normalized number of samples $\alpha = \log(n)/\log(d)$, for different input dimensions $d=\{80,100,120\}$. The latent feature dimension is fixed to $d^\varepsilon=\sqrt{d}$. Despite the non-polynomial nature of $g^\star$, the MSE exhibits a sharp drop around the theoretically predicted scaling $\alpha \simeq k+\varepsilon$.
    \textbf{Right:} Feature overlap between the learned first-layer representations $\{\widehat h^{(1)}_\mu\}_{\mu=1}^n$ and the ground truth (see Appendix~\ref{sec:app:numerics} for details). Consistently with the behaviour of the MSE, the overlap increases significantly beyond the same threshold, illustrating the robustness of the hierarchical estimator to this choice of non-linearity.}
   \label{fig:gcrelu}
\end{figure*}

\paragraph{Relaxing the Hermite order of the $1^{\rm st}$ layer.}
Another setting of interest consists in relaxing the condition regarding the order of the Hermite polynomials considered, being it always $k=2$ in the main text. Although going to higher $k$ presents numerical challenges with $\mathcal{O}(d^k)$ scaling of the effective number of parameters, in Fig.~\ref{fig:k3}, we report numerical results obtained with a third-order Hermite polynomial ($k=3$) in the first layer.

For $k=3$, the effective number of parameters is given by the dimension of the symmetric third-order tensor space, $D_1 = \mathcal{B}(d,3) = d(d+1)(d+2)/6$. Although this quantity is asymptotically equivalent to $d^3$ in the high-dimensional limit, finite-size prefactors are numerically significant at the dimensions accessible in practice. As a consequence, finite-size effects cannot be neglected when interpreting the location of the learning transition. We therefore define the effective sample complexity threshold as
\begin{equation}
    \alpha_{\rm th}(d) = \log_d\!\left( \frac{d(d+1)(d+2)}{6} \right) + \varepsilon ,
\end{equation}
which converges to $3+\varepsilon$ only in the asymptotic limit $d \to \infty$. The thresholds reported in Fig.~\ref{fig:k3} account for this finite-size correction and show good agreement with the numerical results.

\begin{figure*}
   \centering
\includegraphics[width=0.7\linewidth]{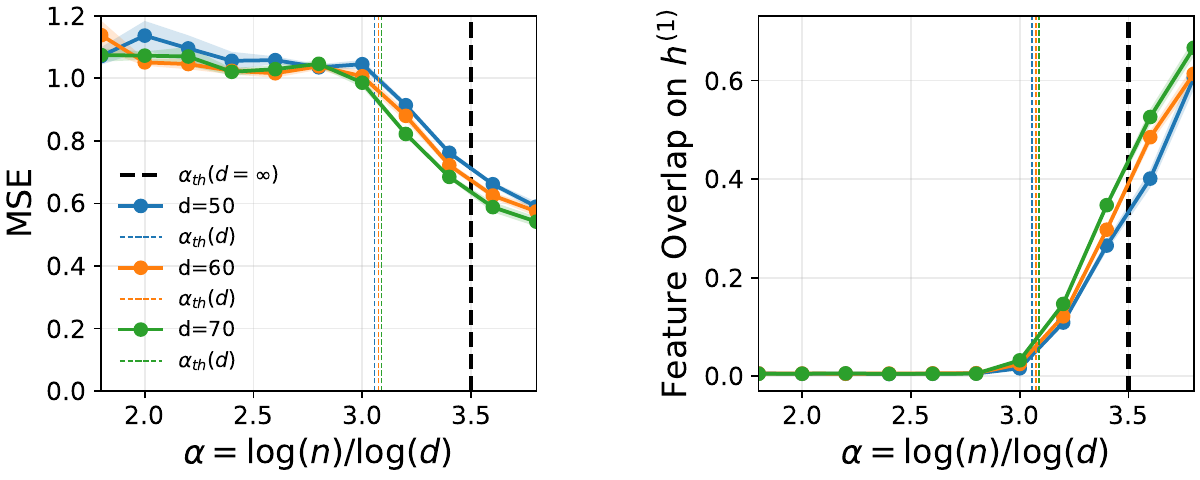}
   \caption{\textbf{Learning with a third-order Hermite polynomial ($k=3$).}
    Performance of the hierarchical estimator described in Algorithm~\ref{alg:agnostic-recovery} when learning the target~\eqref{eq:main:simple_target} with a third-order Hermite polynomial in the first layer.
    \textbf{Left:} Mean Squared Error (MSE) of the label predictor $\{\hat y_\mu\}_{\mu=1}^n$ as a function of the normalized number of samples $\alpha = \log(n)/\log(d)$, for different input dimensions $d=\{50,60,70\}$. The latent feature dimension is fixed to $d^\varepsilon=\sqrt{d}$. Vertical colored dashed lines indicate the finite-size theoretical thresholds $\alpha_{\mathrm{th}}(d)$ accounting for the effective dimension of the symmetric third-order feature space, while the black dashed line corresponds to the asymptotic prediction $\alpha = k+\varepsilon$.
    \textbf{Right:} Feature overlap between the learned first-layer representations $\{\widehat h^{(1)}_\mu\}_{\mu=1}^n$ and the ground truth (see Appendix~\ref{sec:app:numerics}). Both the MSE and the feature overlap exhibit a clear transition around the finite-size thresholds, with stronger finite-size effects compared to the $k=2$ case.
    }
   \label{fig:k3}
\end{figure*}

\paragraph{Possibility of Backward Feature Correction:} Consider the simplest example with two-fold quadratic compositional target in eq.~\ref{eq:main:simple_target}. Computing the expectation of the moment matrix, one can see that the thirs layer featuress $A^{(2)}$ contribute to the signal as an effective rotation of the eigenvectors (spikes) that will be found with the spectral procedure. This means that although one recovers the features $\{\hat A_i\}_{i=1}^{d^\epsilon}$ up to rotation, once the next layer features $\hat{A}^{(2)}$ are recovered, one can disambiguate and ``unrotate'' to the original basis to recover the actual $\widehat{A}^{(1)}$. This is an example of backward feature correction, as explained first by \cite{allen2023backward}.

\section{Spectral learning as implicit gradient descent}
\label{sec:gd-spectral}
The connection between spectral estimation and gradient-based optimization has been extensively studied in recent years. More recently, the development of preconditioned optimization methods \citep{martens2015optimizing,kingma2014adam,anil2020scalable} and modern variants such as Shampoo \citep{gupta2018shampoo,anil2020scalable}, has further refined our understanding of how curvature information shapes the optimization landscape.

In this section, we make this connection precise for our spectral estimator by showing that gradient descent on a matched architecture implicitly performs power iteration on the moment matrices $\widehat{C}^{(1)}_2$ and $\widehat{C}^{(2)}_2$. This connection is not entirely new, at least for the estimation of the last layer ($A_2$), and was at the roots of the spectral method discussed in \cite{lu2020phase,maillard2022construction,mondelli2018fundamental,defilippis2024dimension,kovavcevic2025spectral} and exploited in a variety of papers (e.g. \cite{bonnaire2025role,zhang2025neuralnetworks}), albeit with explicit transformation of the target function.

\subsection{Gradient-Based Setting}

We consider a matched student-teacher architecture and study the gradient descent updates of the student parameters. We introduce the notation and loss function below, and then compute the corresponding
gradients in order to identify the empirical estimators governing the learning dynamics.

Throughout this analysis, we focus on the early-time training dynamics starting from small random initialization. In this regime, the parameters remain close to their initial values and the gradients are dominated by terms that are linear in the current estimates. As a consequence, the learning dynamics are driven by empirical moment matrices, while higher-order interaction terms can be neglected at leading order.

We write the teacher with flatten version of the first Hermite polynomial, namely we note $\boldsymbol{\phi}_\mu = F[H_k(\bx_\mu)] \in \R^{d^k}$, also $ F[\{A_i\}^{d^\varepsilon}_{i=1}] = A^{(1)} \in \R^{d^\varepsilon \times d^k}$. Then, on can write the teacher as:
\begin{align}
    y_\mu = \langle A^{(2)} , (A^{(1)} \boldsymbol{\phi}_\mu) (A^{(1)} \boldsymbol{\phi}_\mu)^T  - I_p \rangle \, .
\end{align}
Then we define a fitting model whose architecture matches precisely the target one. Thus, by noting the estimators at a time $t\geq 0$, $\widehat A^{(1)}_t \in \R^{d^\varepsilon \times d^k}$ and $\widehat A^{(2)}_t$, one can define the estimator on the label: 
\begin{align}
    \widehat y_t \equiv \widehat y_t(\widehat A^{(1)}_t, \widehat A^{(2)}_t) := \langle \widehat A^{(2)} , (\widehat A^{(1)} \boldsymbol{\phi}_\mu) (\widehat A^{(1)} \boldsymbol{\phi}_\mu)^T  - I_p \rangle \, .
\end{align}
We consider a standard squared loss for the matched architecture,
\begin{align}
    \mathcal{L}\equiv\mathcal{L}(A^{(1)}, A^{(2)}) = \frac{1}{2n}\sum_\mu \left(y_\mu - \widehat y_\mu(A^{(1)}, A^{(2)})\right)^2 \, .
\end{align}
We consider the Gradient Descent procedure to update the latent estimators $\widehat A^{(1)}_t$ and $\widehat A^{(2)}_t$, it reads: 
\begin{align}
    \forall i \in [p],\;  \widehat A^{(1)}_{i, t+1} = \widehat A^{(1)}_{i, t} - \eta \boldsymbol \nabla_{A^{(1)}_{i}} \mathcal{L}(\widehat A^{(1)}_t, \widehat A^{(2)}_t) \, , \\
    \forall i \in [p],\;  \widehat A^{(2)}_{i, t+1} = \widehat A^{(2)}_{i, t} - \eta \boldsymbol \nabla_{A^{(2)}_{i}} \mathcal{L}(\widehat A^{(1)}_t, \widehat A^{(2)}_t) \, .
\end{align}

\subsection{Emergence of the first-layer estimator}

Let us consider that the estimator at the second layer $\widehat A^{(2)}_t$ are fixed and non vanishing. Then, for the first layer, the gradient of the loss reads: 
\begin{align}
    \boldsymbol \nabla_{A^{(1)}_{i}} \mathcal{L} &= \frac{-2}{n} \sum_\mu \left(y_\mu - \widehat y_\mu\right) (\widehat {\mathbf{A}}_{i,t}^{(2)})^T \widehat A^{(1)}_t \left( H_2(\boldsymbol{\phi}_\mu) + I_{d^k}\right) \\
    &= -2 (\widehat {\mathbf{A}}_{i,t}^{(2)})^T \widehat A^{(1)}_t \left( \widehat C^{(1)}_k + I_{d^k}\right) + (\widehat {\mathbf{A}}_{i,t}^{(2)})^T \widehat A_t^{(1)} \left( \frac{2}{n} \sum_\mu \widehat y_\mu\left( H_2(\boldsymbol{\phi}_\mu) + I_{d^k}\right)\right)\, .
\end{align}
The above expression shows that, at leading order, the gradient update for the first-layer parameters involves the empirical moment matrix $\widehat C^{(1)}_k$ introduced in the main text. In the early-time regime, this matrix therefore governs the learning dynamics of $\widehat A^{(1)}_t$. The second order computation leads to: 
\begin{align}
    \mathcal{H}_{ij} =  \boldsymbol \nabla_{A^{(1)}_{j}}\boldsymbol \nabla_{A^{(1)}_{i}}^T \mathcal{L} &= - 2\widehat A^{(2)}_{ij} (\widehat C^{(1)}_k + I_{d^k}) + \widehat A^{(2)}_{ij} \left( \frac{2}{n} \sum_\mu \widehat y_\mu\left( H_2(\boldsymbol{\phi}_\mu) + I_{d^k}\right)\right) \\
    &+ (\widehat {\mathbf{A}}_{i,t}^{(2)})^T \widehat A_t^{(1)} \left( \frac{2}{n} \sum_\mu \left( H_2(\boldsymbol{\phi}_\mu) + I_{d^k}\right)^2 \right) (\widehat A_t^{(1)})^T \widehat {\mathbf{A}}_{j,t}^{(2)} \, .
\end{align}
Thus, at initialization ($\widehat A^{(1)}_{t=0}\simeq 0$), it yields:
\begin{align}
    \mathcal{H}_{t=0} \simeq \widehat A^{(2)}_{t=0} \left( \widehat C^{(1)}_k + I_{d^k} \right) \, .
\end{align}
As a result, the initial evolution of the first-layer parameters is dominated by a linear action of $\widehat C^{(1)}_k$, leading to an implicit power-iteration–like behavior that progressively aligns the estimates with its leading eigenspaces.

\subsection{Gradient structure for the second layer}

The computation of the gradient for the second layer is straightforward and yields:
\begin{align}
    \nabla_{A^{(2)}} \mathcal{L}_t = \frac{-1}{n}\sum_\mu (y_\mu - \widehat y_\mu) H_2(\widehat \bh_\mu^{(1)}) = - \; \widehat C^{(2)} + \frac{1}{n}\sum_\mu \widehat y_\mu H_2(\widehat \bh_\mu^{(1)}) \, .
\end{align}
Thus at initialisation with $\widehat A^{(2)}_{t=0} \simeq 0$, the gradient reads: 
\begin{align}
    \nabla_{A^{(2)}} \mathcal{L}_{t=0} = - \, \widehat C^{(2)}_2 \, .
\end{align}
This shows that, at initialization, gradient descent on the second-layer parameters is
directly driven by the empirical matrix $\widehat C^{(2)}$, whose leading eigenspaces
define the spectral estimator studied in the main text.

\subsection{Conclusion}

Importantly, the empirical matrices $\widehat C^{(1)}_k$ and $\widehat C^{(2)}_2$ that appear naturally in the gradient dynamics are exactly the estimators used by our spectral procedure. While the spectral method explicitly constructs these matrices and extracts their leading eigenspaces, gradient descent implicitly accesses the same objects through its update rules. This explains why both approaches exhibit the same learning thresholds and recovery behavior.

\end{document}